\documentclass[11pt,letterpaper]{article} 

\pagestyle{plain}
\usepackage[margin=0.9in,includefoot=false,
            footskip=0.4in,nohead=true]{geometry}
\bibliographystyle{abbrvnat}

\usepackage{url}
\usepackage{hyperref}
\usepackage{color}
\usepackage{subfigure}
\usepackage[numbers,sort&compress]{natbib}
\usepackage{verbatim}
\usepackage{authblk}
\usepackage[frak=mma,scr=rsfs,cal=mma]{mathalfa}
\DeclareMathAlphabet{\mathcal}{OMS}{cmsy}{b}{n}
\usepackage[pdftex]{graphicx}
\usepackage{tabularx}
\usepackage[margin=6pt,font={sf,footnotesize},labelfont=bf]{caption}
\usepackage{times}
\usepackage{amsmath}
\usepackage{amssymb}
\usepackage{dsfont}
\definecolor{linkcolour}{rgb}{0,0.2,0.6}
\hypersetup{colorlinks,breaklinks,linkcolor=linkcolour,citecolor=linkcolour,filecolor=linkcolour,urlcolor=linkcolour}
\usepackage{sectsty}
\usepackage{bm}
\addtolength{\textfloatsep}{-0.2in}
\addtolength{\abovecaptionskip}{-0.0in}
\sectionfont{\large \sffamily\bfseries}
\subsectionfont{\normalsize \sffamily\bfseries}
\subsubsectionfont{\it \sffamily}
\setlength{\bibsep}{1pt}

\makeatletter

\makeatother

\usepackage{amssymb, graphicx, dsfont, setspace, amsfonts}
\usepackage{algorithm}
\usepackage{algorithmic}
\usepackage{amsthm}

\renewcommand{\subset}{\subseteq}
\DeclareMathOperator{\cost}{cost}

\newcommand{\st}{\colon\,}
\newcommand{\cee}{\mathcal{C}}

\newcommand{\caze}[2]{\textbf{Case {#1}:} \textit{#2}}
\newcommand{\sizeof}[1]{\left\lvert{#1}\right\rvert}
\newtheorem{observation}{Observation}
\newtheorem{lemma}{Lemma}
\newtheorem{definition}{Definition}

\newcommand{\wpp}{w^+}
\newcommand{\wmm}{w^-}

%%%%%%%%%%%%%%%%%%%%%%%%%%%%%%%%%%%%%%%%%%%
%%%%%%%%%%%%%%%%%%%%%%%%%%%%%%%%%%%%%%%%%%%
%%%%%%%%%%%%%%%%%%%%%%%%%%%%%%%%%%%%%%%%%%%

\begin{document}

\title{\Large\sffamily\bfseries A new correlation clustering method for cancer mutation analysis}

{\normalsize
\author[1,2,$\dagger$]{\sffamily Jack P. Hou}
\author[3,$\dagger$]{\sffamily Amin Emad}
\author[3]{\sffamily Gregory J. Puleo}
\author[1,4,*]{\sffamily Jian Ma}
\author[3,*]{\sffamily Olgica Milenkovic}
\affil[1]{\small \sffamily Department of Bioengineering and Carl R. Woese Institute for Genomic Biology, University of Illinois at Urbana-Champaign}
\affil[2]{\small \sffamily Medical Scholars Program, University of Illinois at Urbana-Champaign}
\affil[3]{\small \sffamily Department of Electrical and Computer Engineering and Coordinated Science Lab, University of Illinois at Urbana-Champaign}
\affil[4]{\small \sffamily Present Affiliation: School of Computer Science, Carnegie Mellon University}
\affil[$\dagger$]{\small \sffamily These authors contributed equally.}
\affil[*]{\small \sffamily Corresponding authors:
\href{mailto:jianma@cs.cmu.edu}{jianma@cs.cmu.edu} (J.M.) and \href{mailto:milenkov@illinois.edu}{milenkov@illinois.edu} (O.M.)}

\date{\vspace{-5ex}}

\maketitle

\begin{abstract} 
\noindent Cancer genomes exhibit a large number of different alterations that affect many genes in a diverse manner. 
It is widely believed that these alterations follow combinatorial patterns that have a strong connection with the underlying molecular interaction networks and functional pathways. A better understanding of the generative mechanisms behind the mutation rules and their influence on gene communities is of great importance for the process of driver mutations discovery and for identification of network modules related to cancer development and progression.
We developed a new method for cancer mutation pattern analysis based on a constrained form of correlation clustering. Correlation clustering is an agnostic learning method that can be used for general community detection problems in which the number of communities or their structure is not known beforehand.
The resulting algorithm, named C$^3$, leverages mutual exclusivity of mutations, patient coverage, and driver network concentration principles; it accepts as its input a user determined combination of heterogeneous patient data, such as that available from TCGA (including mutation, copy number, and gene expression information), and creates a large number of clusters containing mutually exclusive mutated genes in a particular type of cancer.
The cluster sizes may be required to obey some useful soft size constraints, without impacting the computational complexity of the algorithm. To test C$^3$, we performed a detailed analysis on TCGA breast cancer and glioblastoma data and showed that our algorithm outperforms the state-of-the-art CoMEt method in terms of discovering mutually exclusive gene modules and identifying driver genes.
Our C$^3$ method represents a unique tool for efficient and reliable identification of  mutation patterns and driver pathways in large-scale cancer genomics studies.
\end{abstract}

\section{Introduction}

Rapid advances in high-throughput sequencing technologies have provided unique opportunities for analyzing large numbers of cancer genomes. 
However, the complexity of genomic alterations in cancer remains a challenge that has to be overcome in order to fully characterize the functional roles of various mutations. 
Cancer genomes often exhibit a large number of different mutations that affect genes in diverse manners. 
But the vast majority of these mutations do not have significant impact on tumorigenesis~\citep{hanahan2011}. 
A central question in cancer genomics is how to distinguish ``driver'' mutations, which contribute to tumorigenesis, from functionally neutral ``passenger'' mutations. 
Such driver mutations (e.g., point mutations or copy number changes) are of critical importance in elucidating key biological pathways perturbed in cells that eventually lead to cancer.

Many computational methods have been developed to facilitate the discovery of driver mutations~\citep{carter2009,dees2012,lawrence2013,gonzalez2012,manolakos2014camodi}. 
Due to the high level of inter-tumor heterogeneity, two patients with the same cancer may have vastly different drivers and most cancer mutations occur in very low frequency in the patient population.
Therefore, approaches relying on simple recurrence or frequency of mutations do not work well in practice. 
Recently, pathway-based and network-based models have been shown to be effective not only in determining common driver mutations and mutation patterns, but also in pinpointing the key biological pathways and subnetworks affected by driver mutations ~\citep{pe2011,ng2012,bashashati2012,paull2013,hou2014}.
Such methods have a unique advantage in so far that in addition to mutation analysis, they take into account gene interactions as an added source of prior knowledge.

In parallel, a number of methods have been proposed to identify driver pathways, i.e., groups of genes that may interact together in combinatorial patterns to promote tumorigenesis. 
The authors of~(\citealp{Ciriello12}) described a method called MEMo, and subsequently used it to show that mutually exclusive modules based on known networks can aid in determining groups of genes that contribute to tumorigenesis.
These gene groups, or modules, are jointly highly recurrent, have similar pathway impact in terms of biological processes, and their corresponding mutations tend to be mutually exclusive, meaning that very often only one gene in each gene group is mutated at a given time in any given patient. This mutual exclusivity rule in cancer pathways is supported by the observations that, in general, one mutated gene suffices to perturb the function of its corresponding pathway. Multiple mutations would require significantly higher energy investments on the part of cancer cells, and are hence selected against. 
Dendrix~\citep{vandin2012} (and later Multi-Dendrix~\citep{Leiserson2013}) was developed to identify driver pathways {\it de novo} using mutual exclusivity and coverage (recurrence) principles, without relying on known network information that has the potential to improve the discovery process of new modules. 
More recently, CoMEt~\citep{Leiserson2015} was proposed to address an inherent bias in Dendrix and Multi-Dendrix that resulted in high frequency mutations being significantly more likely to be included in mutually exclusive modules. 

However, while Dendrix, Multi-Dendrix and CoMEt have the ability to identify mutually exclusive modules {\it de novo}, there still have significant limitations. 
These methods are typically very inefficient when it comes to applying them on large-scale datasets with large parameter setting.
Also, some of these methods are randomized in nature and no guarantees exist that multiple runs of the methods will produce compatible results. Furthermore, almost all methods are able to identify only a small number of modules with limited number of genes, as cluster sizes are critical algorithmic parameters from the perspective of computational tractability.

To overcome these and other shortcomings of existing methods, we introduce a novel method called Cancer Correlation Clustering (C$^3$) to directly tackle the problems of integrating diverse sources of evidence regarding driver pattern behavior and eliminating computational bottlenecks associated with large cluster sizes or cluster numbers. C$^3$ uses an optimization framework specifically developed for the driver discovery task, where data is converted to a simple set of optimization weights that do not require the algorithm to change upon incorporation of new data sources. In addition to this flexibility, C$^3$ has low computational cost, and it allows for adding relevant problem constraints while retaining good theoretical performance guarantees. 

The paper is organized as follows. A basic introduction of the principles of correlation clustering is provided in Section~\ref{sec:approach}. Section~\ref{sec:methods} contains a description of the weight computation methods, the algorithmic clustering approach based on the computed weights, and the evaluation criteria used to compare C$^3$ and CoMEt. Section~\ref{sec:results} contains the main results of our analysis, contrasting the performance of C$^3$ and CoMEt on breast cancer and glioblastoma data. A discussion of our findings and concluding remarks are given in Section~\ref{sec:discussion}. A rigorous mathematical performance analysis of C$^3$ may be found in the Supplementary Materials.

\section{Approach} \label{sec:approach}
The basic idea behind the C$^3$ approach is \emph{correlation clustering}, an agnostic learning technique first proposed in~\citep{Bansal2004}. In the most basic form of the clustering model, one is given a set of objects and, for all or some pairs of objects, one is also given an assessment as to whether the objects are ``similar'' or ``dissimilar''. This information is described using a complete graph with labeled edges: each object is represented by a vertex of the graph, and the assessments are represented by edges labeled with either a ``+'' symbol, for similar objects, or a ``-'' symbol, for dissimilar objects. The goal is to partition the objects into clusters so that the edges within clusters are mostly positive and the edges between clusters are mostly negative. Unlike in many other clustering models, such as k-means~\citep{Hartigan1979}, the number of clusters is not fixed ahead of time and finding the optimal number of clusters is part of the problem. 

The similarity assessments need not be mutually consistent: for
example, if the graph contains a triangle with two positive edges and one negative edge, then we must either group the endpoints of the negative
edge together, erroneously putting a negative edge inside a cluster, resulting in a
``negative error'' or else we must group them separately, forcing one
of the positive edges to erroneously go between clusters, resulting in a ``positive error''. When a perfect clustering is not possible, we seek an
\emph{optimal} clustering: one that minimizes the total number of
``errors.'' This form of correlation clustering is known to be NP-hard, but depending on the graph topology, various constant or logarithmic approximation guarantees exist.

The authors of~\citep{Bansal2004} also proposed a weighted version
of the correlation-clustering problem, where the edges of the graph receive \emph{weights} between $-1$ and $1$ rather than simply
receiving $+$ or $-$ labels: an edge with weight $x$ incurs cost
$\frac{1+x}{2}$ if it is placed between clusters and cost
$\frac{1-x}{2}$ if it is placed within a cluster. A more general 
weighted formulation was introduced in~\citep{Charikar2003, Charikar2005}, and this is the formulation we subsequently consider. In this model, each edge $e$ is assigned two nonnegative weights, $\wpp_e$ and $\wmm_e$. A clustering incurs cost
$\wpp_e$ if $e$ is placed between clusters, and incurs cost $\wmm_e$
if $e$ is placed within a cluster. 

If no restrictions are placed on the weights $\wpp_e$ and $\wmm_e$,
then it is possible to have edges with $\wpp_e = \wmm_e = 0$; these
edges are effectively absent from the graph, so there is no loss of
generality in assuming that the graph is a complete graph. In order to arrive at problems that have efficient constant approximation algorithms, one needs to 
place certain restrictions on $\wpp_e$ and $\wmm_e$. The \emph{probability constraints} give a natural restriction on the edge weights $\wpp_e + \wmm_e = 1$ for every edge $e$. Another widely studied restriction is the \emph{triangle inequality} restriction, where one requires $\wmm_{uw} \leq
\wmm_{uv} + \wmm_{vw}$ for all distinct vertices $u,v$ and $w$. 

The analytic approach pursued in this work operates on the following model: genes which show sufficiently large mutation prevalence in cancer patients represent vertices of the complete graph to be clustered. 
Note that in this work we only use the top 5\% of genes ordered by mutation frequency. This equates to 170 genes in glioblastoma (GBM) and 130 genes in breast cancer (BRCA). The weights $\wpp_e$ and $\wmm_e$ of an edge $e$ connecting two genes $g_1$ and $g_2$ are weighted sums of the mutual exclusivity and coverage strength, as well as an adequately chosen measure of network distance and expression similarity. 

More precisely, the negative weights $\wmm_e$ are chosen to be relatively small if the endpoint genes describing the edge are deemed to be mutually exclusive in cancer patients. A small negative weight encourages placing mutually exclusive genes \emph{within the same cluster}. The positive weights jointly depend on the coverage, network distance and expression correlation of the endpoint genes: the larger the joint coverage, co-expression and inverse of the network distance of the endpoint genes, the more likely they will end up in the same cluster. Precise mathematical formulations of the weight functions will be provided in the next section. 

To control the size of the resulting clusters so as discourage uninformative singleton and giant clusters, we developed two new correlation clustering algorithms that use cluster sizes as problem parameters that may be chosen by the users. These cluster size bounds also allow for more accurate comparison with other methods which operate with inherent cluster size constraints. Notice that unlike in the aforementioned known methods, the cluster sizes have no bearing on the complexity of our algorithm nor on their overall approximation quality.

The driver discovery approaches closest to C$^3$ are Multi-Dendrix~\citep{Leiserson2013} and CoMEt~\citep{Leiserson2015}. Multi-Dendrix is an integer linear programming clustering algorithm that ensures that the genes within a cluster have mutation patterns that satisfy mutual exclusivity and coverage: for any two genes in a cluster, the number of patients in which these genes are mutated at the same time is relatively small; in addition, a large portion of the patients should have at least one mutation in each cluster. 
CoMEt uses a statistical score of mutation exclusive that conditions upon the frequency of each alteration, alleviating the inherent bias caused by frequently mutated genes.
Compared to Multi-Dendrix, C$^3$ uses a \emph{weighted} linear programming relaxation instead of an integer linear program which significantly improves the versatility and running time of the algorithm. Furthermore, the weights allow for straightforward incorporation of heterogeneous sources of evidence into the clustering method and the algorithm itself remains unchanged with the addition of new data. This flexibility comes at the cost of C$^3$ providing only an approximate solution. Nevertheless, given the inherently approximate nature of weight selection and parametrization of both algorithms, this does not appear to be a significant shortcoming. Also, empirical evaluations suggest that the approximation algorithms produce results very close to the optimal solution. Another advantage is that if one needs to change the combinatorial conditions that the clusters satisfy, Multi-Dendrix cannot be easily adapted, and a whole new algorithm needs to be developed. On the other hand, our algorithms are very flexible and most combinatorial patterns can be easily incorporated in the same algorithm, by simply changing the weights assigned to the edges. 

\section{Methods} \label{sec:methods}

Before rigorously describing our algorithmic methods, we introduce some relevant notation and explain how to estimate appropriate clustering weights based on available data. 

\subsection{Clustering Weights}

Let $G(V,E)$ be a complete graph, where $V(G)$ denotes the set of vertices and $E(G)$ denotes the set of edges of the graph $G$, respectively. The symbol $e\in E(G)$ or $e=uv$ with $u,v\in V(G)$ is used to denote a generic edge. Each edge is assigned a positive weight $\wpp_e$ and a negative weight $\wmm_e$. Recall the interpretation of these weights: for two distinct vertices $u,v\in V(G)$, $\wpp_{uv}$ is the cost of placing $u$ and $v$ in different clusters; consequently, by making the positive weight of an edge large, one can discourage placing the corresponding two genes into different clusters. Similarly, $\wmm_{uv}$ is the cost of placing $u$ and $v$ in the same cluster, and hence making this weight large discourages placing the corresponding two genes into the same cluster. In the rest of this section, we will explore different ways of defining the weights $\wmm_{uv}$ and $\wpp_{uv}$; in order to avoid confusion between the different definitions, each weight we define will include a parenthetical abbreviation, so that, for example, $\wpp(\text{c})_{uv}$ will refer to the positive weight of $uv$ defined according to the coverage criteria, while $\wpp(\text{c,n})_{uv}$ will refer to the positive weight of $uv$ according to the coverage and network criteria.

The weights are computed using four types of datasets: gene alteration data, gene copy number variation (CNV), network information (NI), and gene expression (GE) data. Let $n_p$ denote the number of samples (i.e., patient genomes available) and let $n_g$ denote the number of genes. Also, let $\mathbf{A}\in\{0,1\}^{n_g\times n_p}$ denote the matrix containing alteration data: if gene $i$ is altered in sample $j$, we set $\mathbf{A}(i,j)=1$; otherwise, we set $\mathbf{A}(i,j)=0$. Also, let $\mathbf{C}$ be an $n_g\times n_p$ matrix representing the CNV data: we set $\mathbf{C}(i,j)=0$ if there is no change in the copy number of gene $i$ in sample $j$; otherwise, we choose an integer value reflecting the deviation of the CNV number from its baseline value. Hence, the CNV matrix contains both positive and negative values corresponding to the copy number changes of the corresponding gene in each sample. 

To combine the CNV with alteration, we referred to the following method. Using the matrices $\mathbf{A}$ and $\mathbf{C}$, we formed a new binary matrix $\mathbf{M}\in\{0,1\}^{n_g\times n_p}$ such that
\begin{align}\label{eq:cnv1}
\mathbf{M}(i,j)=0\hspace{10pt}\textnormal{if}\hspace{10pt}\mathbf{A}(i,j)=0\  \  \textnormal{AND}\  \  l_{cnv}<\mathbf{C}(i,j)<h_{cnv},
\end{align}
and $\mathbf{M}(i,j)=1$, otherwise. In this formulation, $l_{cnv}$ and $h_{cnv}$ are lower and upper bounds that may be chosen by the user. These bounds determine what is deemed to be a significant CNV change. In our tests, we set $l_{cnv}=-1$ and $h_{cnv}=3$, although other options are clearly possible. It is worth pointing out that more conservative CNV thresholds tend to decrease coverage, while more relaxed CNV assumptions tend to decrease mutual exclusivity.
Based on the procedure above, we arrive at one ``mutation'' matrix $\mathbf{M}$ which we use instead of the matrices $\mathbf{A}$ and $\mathbf{C}$. 

Finally, let $\mathbf{Z}\in\mathbb{R}^{n_g\times n_p}$ be the matrix corresponding to $z$-scores of gene expression data: here, $\mathbf{Z}(i,j)$ denotes the $z$-score of the expression of gene $i$ in sample $j$. More precisely, if the raw expression of gene $i$ in sample $j$ equals $x_{ij}$, then $\mathbf{Z}(i,j)=\frac{x_{ij-\mu_{i}}}{\sigma_i}$; $\mu_i$ denotes the average expression of gene $i$ and $\sigma_i$ denotes its standard deviation. 

%%%%%%%%%%%%%%%%%%%%%%%%%%%%%%%%%%%%%%%%%
\subsubsection{Clustering Weights Determined Based on Mutual Exclusivity and Coverage (ME-CO)}\label{sec:MEC}

The idea behind our approach is to impose the mutual exclusivity constraint through the weights $\wmm_e$ and coverage constraint through the weights $\wpp_e$s. 

For each gene (i.e., vertex) $u$, let $\mathcal{S}(u)$ denote the set of patients in which $u$ is mutated. Note that we use the matrix $\mathbf{M}$ to determine if a mutation in the gene exists, either due to sequence alteration or CNV. 
Then, for any $u,v\in V(G)$, the negative weights are chosen according to
\begin{align}
\wmm(\text{e})_{u,v}=a\times\frac{|\mathcal{S}(u)\cap\mathcal{S}(v)|}{\min(|\mathcal{S}(u)|,|\mathcal{S}(v)|)},
\end{align}
where $a$ is a scaling parameter to be chosen by the user. The intuition behind the use of the factor $a$ is that if $a$ is large (e.g., empirically, a value of $a>3$ is deemed large), the mutual exclusivity is enforced strictly, while if $a$ is small, (e.g. $a<3$), the genes in each cluster may not be highly mutually exclusive. Also, note that $0\leq\frac{|\mathcal{S}(u)\cap\mathcal{S}(v)|}{\min(|\mathcal{S}(u)|,|\mathcal{S}(v)|)}\leq 1$. 

To capture the coverage property through the positive weights, observe that if two genes increase the coverage significantly, their positive weight should be large so that they are encouraged to be placed in the same cluster. To determine the positive weights, we first form the set $\mathcal{D}=\{D(u,v)\},$ for all $u,v\in V(G)$, where $D(u,v)=|\mathcal{S}(u)\  \Delta\  \mathcal{S}(v)|$ and $\Delta$ denotes the symmetric difference of two sets. A large value for the symmetric difference $D(u,v)$ suggests that the vertices $u$ and $v$ should be placed in the same cluster, since they increase the coverage of the cluster. 

Given the set $\mathcal{D}$, we define $T(J)$ to be the $J$th percentile of the values in $\mathcal{D}$. In all our runs, we used the default value of $J=95$, although this choice may be governed by the user as well. The positive weights are chosen according to:
\begin{align}
\wpp(\text{c})_{uv} &=\left\{
\begin{array}{ll} 
1,&\text{if}\  \  \  \  D(u,v)>T(J)\\
\frac{1}{T(J)} \times D(u,v)&\text{otherwise.}
\end{array}
\right.
\end{align}
Note that by this definition, $0\leq\wpp(c)_e\leq 1$ for any $e\in E(G)$. 

In order to ensure that the positive and negative weights meet the constraints imposed by our algorithm needed to ensure a constant approximation guarantee, we require that for all $u,v\in V(G)$, $\wmm(\text{e})_{uv}+\wpp(\text{c})_{uv}\geq 1$. This leads to the additional constraints:

\begin{align}
&\text{if}\  \  \  \wpp(\text{c})_{uv}+\wmm(\text{e})_{uv}<1,\\\nonumber
&\text{set} \;\,  \wmm(\text{e})_{uv}=\frac{\wmm_{uv}}{\wpp(\text{c})_{uv}+\wmm(\text{e})_{uv}},\;\,
\text{and} \;\;\; \wpp(\text{c})_{uv}=1-\wmm(\text{e})_{uv}.
\end{align}

%%%%%%%%%%%%%%%%%%%%%%%%%%%%%%%%%%%%%%%

\subsubsection{Clustering Weights Determined Based on Mutual Exclusivity, Coverage, and Network Information (NI-ME-CO)}\label{sec:MENC} The comprehensive results of pan-cancer studies reported in a number of recent papers~(\citealp{Leiserson2013},~\citealp{PortaPardo2015},~\citealp{Leiserson2015}, ~\citealp{GarciaAlonso2014}) have revealed the important connection between network topology and cancer driver distribution patterns. More precisely, the effect of deleterious mutations on the phenotype may be suppressed through a particular configuration of the corresponding protein complexes, and at the same time, the strength of the effect of a mutation may be emphasized through another configuration. As an example, most of the variants observed in healthy individuals seem to appear at the periphery of the interactome, and they do not seem to influence network connectivity. In contrast, cancer driver somatic mutations tend to occur in central, internal regions of the interactome and within highly co-integrated components. 
It appears that no previous attempts were made to more precisely quantify the network distances between driver variants, which prompted us to perform the following analysis. We first computed the pairwise (shortest) network distances between genes in a large pathway comprising $8,726$ genes from~\citet{Ciriello12} via an implementation of the standard Dijkstra algorithm~\citep{skiena1990dijkstra}. In this test, we randomly selected $1,000$ pairs in order to reduce the computational burden of running Dijkstra's algorithm $O(8726^2)$ times. By using the most complete known driver list from the Cancer Gene Census (CGC)~\citep{cgc}, we computed the same distances for driver genes, this time for all pairs of genes. The resulting distribution of shortest paths is depicted in Fig.~\ref{fig:driverdistance}. One can clearly observe that the average shortest distance between drivers is significantly smaller than the average shortest distance between two randomly selected genes. A permutation test confirms this observation, and we calculated a $p$-value of less than $0.001$.

\begin{figure}[!tpb]
\centering
\includegraphics[width=0.5\textwidth]{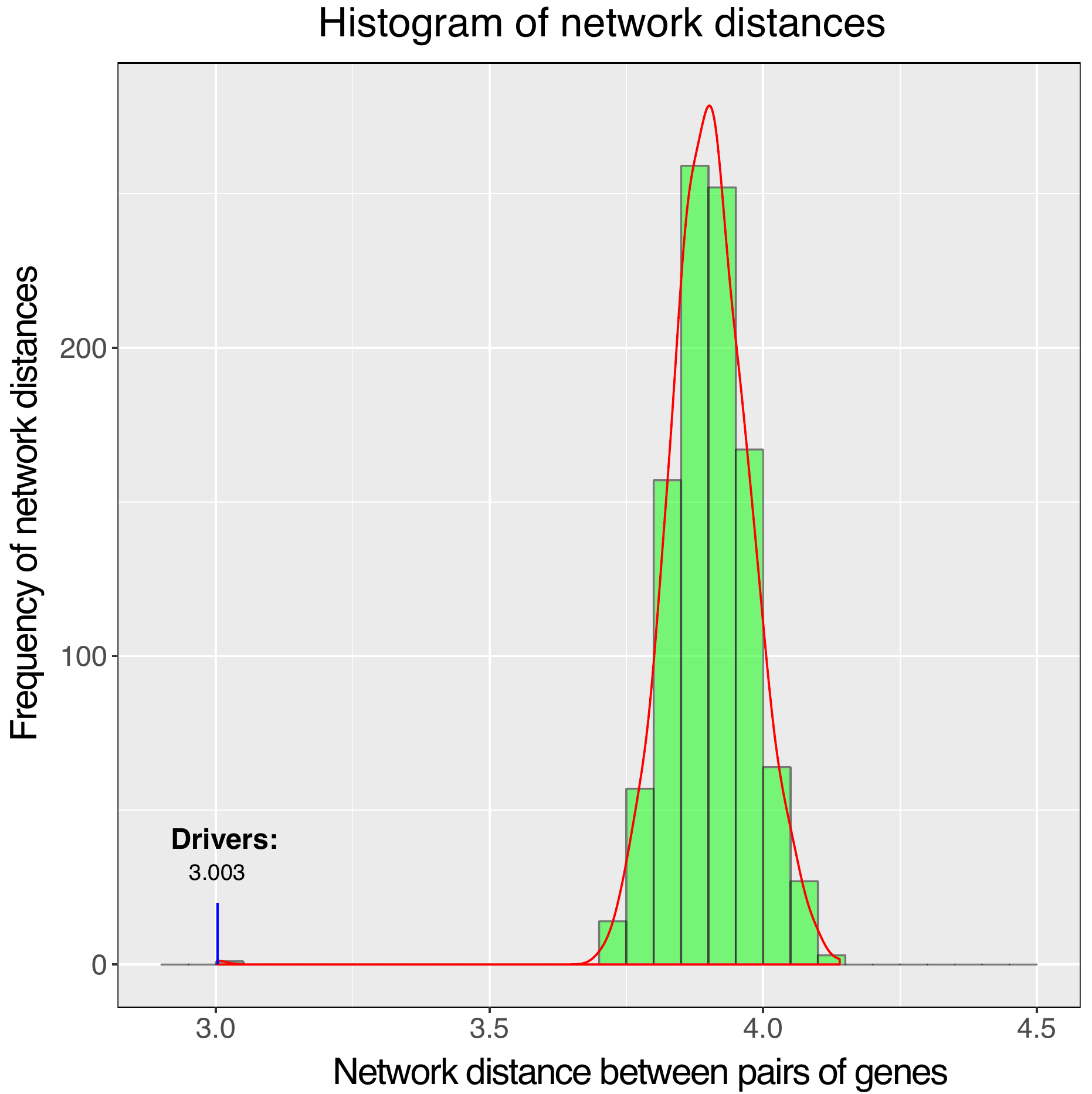} 
\caption{Histogram of shortest distances between randomly selected genes and driver genes in the network.}
\label{fig:driverdistance}
\end{figure}

These findings suggest that when determining potential driver mutations, one should make use of network distance and connectivity information. This may be accomplished in our approach by adjusting the positive weight of edges connecting two genes: if both endpoint genes were to be drivers, they should be sufficiently central to a given pathway, close to other drivers or to each other. 

For this purpose, we consider an undirected graph corresponding to the gene network, denoted by $G'$; in this graph, which is assumed to be known a priori and which in this work was retrieved from the KEGG Database, each vertex corresponds to a gene. The graph is not complete, but rather relatively sparse, and each edge represents an interaction between genes. As before, we let $n_p$ and $n_g=|V(G)|=|V(G')|$ denote the total number of patients and the total number of genes in our dataset, respectively. For each vertex $u\in V(G')$, we let $\mathcal{N}(u)$ denote the set of neighbors of $u$ and let $\mathcal{N}'(u)=\mathcal{N}(u)\cup\{u\}$. The first step in assigning the positive weights is to determine the set $\mathcal{F}=\{f(u,v)\}$, where for any pair of vertices $u,v\in V(G')$, 
\begin{align}
f(u,v)=\frac{|\mathcal{N}'(u)\cap\mathcal{N}'(v)|}{|\mathcal{N}'(u)\cup\mathcal{N}'(v)|}.
\end{align}
Note that $0\leq f(u,v)\leq 1$ for all $u,v$.
A large value of $f(u,v)$ suggests that the genes $u$ and $v$ are well connected and likely to be involved in the same pathway~\cite{Ciriello12}, and that the corresponding genes should be clustered together. 

Given the set $\mathcal{F}$, we define $T'(J')$ to be the $J'$th percentile of the values in $\mathcal{F}$. For any $u,v\in V(G)$, the positive weights are then chosen according to the following formula:
\begin{align}\label{arithmetic_mean}
\wpp(\text{c,n})_{uv} =w_1\,\wpp(\text{c})_{uv}+w_2\,\wpp(\text{n})_{uv},  
\end{align}
where $w_1,w_2 \geq 0$, $w_1+w_2=1$, and as before, the coverage weight equals
\begin{align}
\wpp(\text{c})_{uv} &=\left\{
\begin{array}{ll} 
1,&\text{if}\  \  \  \  D(u,v)>T(J)\\
\frac{1}{T(J)} \times D(u,v)&\text{otherwise,}
\end{array}
\right.
\end{align}
and the network weight equals
\begin{align}
\wpp(\text{n})_{uv} &=\left\{
\begin{array}{ll} 
1,&\text{if}\  \  \  \  f(u,v)>T'(J')\\
\frac{1}{T'(J')} \times f(u,v)&\text{otherwise.}
\end{array}
\right.
\end{align}
Again, in order to ensure that for all $u,v\in V(G)$, $\wmm(\text{e})_{uv}+\wpp(\text{c,n})_{uv}\geq 1$, we add the additional constraints
\begin{align}
&\text{if}\  \  \  \  \wpp(\text{c,n})_{uv}+\wmm(\text{e})_{uv}<1,\\\nonumber
&\text{set}\,\,  \wmm(\text{e})_{uv}=\frac{\wmm(\text{e})_{uv}}{\wpp(\text{c,n})_{uv}+\wmm(\text{e})_{uv}},\,
\text{and}\;\, \wpp(\text{c,n})_{uv}=1-\wmm(\text{e})_{uv}.
\end{align}
The weights $w_1,w_2$ may be chosen in such a way as to emphasize the importance of either coverage or network information. We suggest using $w_1=w_2=1/2$ in a coverage/network only test, although our analysis reveals that emphasizing one criterion over the other offers improved algorithm performance on some datasets.

%%%%%%%%%%%%%%%%%%%%%%%%%%%%%%%%%%%%%%%%%%%%%%%
\subsubsection{Clustering Weights Determined Based on Mutual Exclusivity, Coverage, and Gene Expression Data (EX-ME-CO)}\label{sec:EXMENC}

Similar to the case of network information, expression data may be incorporated through the positive weights, using the assumption that coexpressed genes may be involved in the same function or cancer pathway. Hence, highly coexpressed genes should be encouraged to cluster together.

To explain how to incorporate gene expression data into the clustering procedure, assume that $\mathbf{z}(u)$ and $\mathbf{z}(v)$ denote the vectors of time-evolving expression values corresponding to genes $u$ and $v$, respectively. The first step in assigning the positive weights is to determine the set $\mathcal{G}=\{g(u,v)\}$, where for every pair of genes $u,v$, 
\begin{align}
g(u,v)=\frac{|\langle\mathbf{z}(u),\mathbf{z}(v)\rangle|}{||\mathbf{z}(u)||\  ||\mathbf{z}(v)||}.
\end{align}
Here, $\langle \mathbf{a},\mathbf{b} \rangle$ denotes the classical inner product of the vectors $\mathbf{a}$ and $\mathbf{b}$, while $||\mathbf{a}||$ stands for the $\ell_2$ norm. A large value for $g(u,v)$ indicates that the expression vectors of $u$ and $v$ are highly correlated and hence should be clustered together. Also, note that $0\leq g(u,v)\leq 1$ for all $u$ and $v$. 

Given the set $\mathcal{G}$, we let $T''(J'')$ denote the $J''$th percentile of the values in $\mathcal{G}$. For any $u,v\in V(G)$, the positive weights are chosen according to the following formula:
\begin{align}\label{arithmetic_mean}
\wpp(\text{c,x})_{uv} =w_1\,\wpp(\text{c})_{uv}+w_2\,\wpp(\text{x})_{uv},  
\end{align}
where $w_1,w_2 \geq 0$, $w_1+w_2=1$, and
\begin{align}
\wpp(\text{c})_{uv} &=\left\{
\begin{array}{ll} 
1,&\text{if}\  \  \  \  D(u,v)>T(J)\\
\frac{1}{T(J)} \times D(u,v)&\text{otherwise,}
\end{array}
\right.
\end{align}
and
\begin{align}
\wpp(\text{x})_{uv} &=\left\{
\begin{array}{ll} 
1,&\text{if}\  \  \  \  g(u,v)>T''(J'')\\
\frac{1}{T''(J'')} \times g(u,v)&\text{otherwise.}
\end{array}
\right.
\end{align}
Hence, all the conditions are satisfied for the weights, except possibly the third one. In order to make sure that for all $u,v\in V(G)$, $\wmm(\text{e})_{uv}+\wpp(\text{c,x})_{uv}\geq 1$, we include an additional condition that 
\begin{align}
&\text{if}\;  \wpp(\text{c,x})_{uv}+\wmm(\text{e})_{uv}<1,\\\nonumber
&\text{set}\,\wmm(\text{e})_{uv}=\frac{\wmm(\text{e})_{uv}}{\wpp(\text{c,x})_{uv}+\wmm(\text{e})_{uv}},\,
\text{and}\, \wpp(\text{c,x})_{uv}=1-\wmm(\text{e})_{uv}.
\end{align}

Note that other combinations of datasets may be used, with appropriate changes in the weights. For example, incorporating coverage, network information as well as expression information into a positive weight may be accomplished by setting
\begin{equation}
\wpp(\text{c,n,x})_{uv}=w_1\, \wpp(\text{c})_{uv}+w_2\, \wpp(\text{n})_{uv}+w_3\, \wpp(\text{x})_{uv},
\end{equation}
where $w_1,w_2,w_3 \geq 0$, $w_1+w_2+w_3=1$.

\subsection{Clustering Algorithms}

The bounded cluster size correlation clustering problem for driver gene inference may be formulated as follows. Let $K$ be a ``hard'' bound on the size of the driver clusters, and let the positive $\wpp$ and negative weights $\wmm$ be chosen according to a desired combination of datasets, as described in the previous section. The optimum clustering may be found by solving the integer linear program (ILP) below.

\begin{align}\label{obj1}
    & \underset{x}{\text{minimize}}
    & & \sum_{e \in E(G)}(\wpp_e x_e + \wmm_e (1-x_e)) 
    \\\label{triangle1}
    & \text{subject to}
    & & x_{uv} \leq x_{uz} + x_{zv} \;\;\; {\text{(for all distinct $u,v,z \in V(G)$)}} & \\
    &&&  \sum_{v \neq u}(1 - x_{uv}) \leq K \;\;\; {\text{(for all $u \in V(G)$)}} & \\\label{int1}
    &&&  x_{e} \in \{0,1\} \;\;\; {\text{(for all $e \in E(G)$)}}. &
  \end{align}

In this formulation, and for a fixed edge $e=uv$, $x_{uv}=1$ implies that $u$ and $v$ should belong to different clusters and $x_{uv}=0$ implies that the two vertices should belong to the same cluster. Note that the triangle inequality \eqref{triangle1} ensures that if $u$ and $z$ are in the same cluster and $z$ and $v$ are in the same cluster, then $u$ and $v$ are also in the same cluster. Any clustering of the vertices can be described using the variables $x_e$. For a fixed clustering, the objective function is the cost associated with that clustering. 

Solving the ILP is NP-hard. We hence relax the problem by changing the integer constraint $x_{e} \in \{0,1\}$ to an interval constraint $x_{e} \in [0,1]$. This relaxation leads to a classical LP, the solution of which may be fractional. To obtain a valid clustering, the fractional solutions have to be subsequently \emph{rounded} to produce integer solutions. Unfortunately, known rounding algorithms~(\citealp{Puleo15}) for this problem tend to produce small clusters, often as small as single-vertex clusters. For our study, we hence slightly modify the LP by removing the cluster size constraint~(\ref{int1}), which we move directly to the rounding procedure as described in Algorithm~\ref{rounding}.   

\begin{algorithm}
\caption{}%Algorithm~\ref{algorithm4}}
\label{rounding}  
\begin{algorithmic}
\STATE\textbf{Input:} $\{x_e\}_{e\in E(G)}$, $\alpha$ and $K$
\STATE{Let $S = V(G)$.}
\WHILE{$S \neq \emptyset$}
\STATE{Let the ``pivot vertex'' $u$ be an arbitrary element of $S$.}
\STATE{Let $T = \{w \in S-\{u\} \st x_{uw} \leq \alpha\}$.}
\IF{$\sum_{w \in T}x_{uw} \geq \alpha\sizeof{T}/2$}
\STATE{Output the singleton cluster $\{u\}$.}
\STATE{Let $S = S-\{u\}$.}
\ELSIF{$|T|\leq K$}
\STATE{Output the cluster $\{u\} \cup T$.}
\STATE{Let $S = S - (\{u\} \cup T)$.}
\ELSE
\STATE{Partition $T$ as $T = T'_0 \cup T_1 \cup \cdots \cup T_p$, where
    $\sizeof{T'_0} = K$ and each $\sizeof{T_i} = K+1$ for $0<i< p$ and $\sizeof{T_p} \leq K+1$.}
\STATE{Let $T_0 = \{u\} \cup T'_0$.}
\STATE{Output the clusters $T_0$, $T_1,T_2,\dots,T_p$.}
\STATE{Let $S = S - (\{u\} \cup T)$.}
\ENDIF
\ENDWHILE  
\end{algorithmic}
\end{algorithm}

Algorithm~\ref{rounding} is closely based on the rounding algorithm described in ~\citet{Charikar2003,Charikar2005}.
The idea behind the rounding algorithm is to pivot on one vertex, examine its closest neighbors, where closeness is governed by the value of the output variables $x_e$ of the LP, and partition large neighborhoods if needed to get clusters of size at most $K+1$. In the Appendix of the Supplementary Materials, we prove that the LP and Rounding Algorithm~\ref{rounding} provides a $9$-approximation for the ILP problem, given that the parameter $\alpha$ is set to $2/7$ and given that the weights obey the following constraints:
\begin{itemize}
\item $\wpp_e \leq 1$ for every edge $e$, and
\item $\wpp_e + \wmm_e \geq 1$ for every edge $e$.
\end{itemize}
The above inequalities were addressed in the weight selection process through normalization, as described in the previous section.

\subsection{Evaluation methods}\label{sec:evaluation}

We evaluated the performance of both C$^3$ and CoMEt in terms of their ability to detect \emph{mutually exclusive, high-coverage, and biologically relevant gene clusters}. 
We ran both methods using mutation and CNV data collected from TCGA, pertaining to breast cancer (BRCA)~\citep{TCGA2012} and glioblastoma (GBM)~\citep{GBM}. In addition to GBM and BRCA, we also considered kidney cancer (KIRC) and ovarian cancer (OV), but the available patient data appeared limited at this stage to allow for statistically significant and comprehensive results. 
We accessed the TCGA provisional data using the cBioPortal platform~\citep{cbioportal} on August 14, 2015. 
We ran both methods using the same alteration dataset. 
We evaluated both point mutations and indels, and for CNVs, we used the GISTIC thresholds~\citep{Mermel2011} of -1 and 3 as our cut-offs. 
To focus on mutations with high frequency, we only selected genes in the top $95$ percentile of alteration frequencies, thereby obtaining $130$ genes spanning $959$ patient samples in BRCA and $170$ genes spanning $291$ patient samples in GBM.

To test the effects of cluster sizes and the quality of our results, we ran both C$^3$ and CoMEt to find clusters of sizes upper bounded by $5$, $6$, $7$, $10$, and $15$.
CoMEt is naturally designed to discover the ``most'' mutually exclusive gene sets. Due to the fact that correlation clustering will cluster all genes in a dataset, we only compared the top ten most mutually exclusive gene sets generated by C$^3$ with those of CoMEt.

We ran CoMEt with $1,000$ iterations each and $3$ initialization points to ensure both timely and consistent runs. 
For C$^3$, we ran the C$^3$ clustering method for all combinations of weights $w_1,w_2,w_3 \in \{{0,0.25,0.5,0.75,1\}}$ that satisfy $w_1+w_2+w_3=1$, but selected to report only results for the weight parameters $w_1=0.167$ (coverage), $w_2=0.333$ (network information) and $w_3=0.333$ (expression data). We observe that the choice of the weights may be completely governed by the user, and that the increase in one weight may produce better results in one performance category while reducing the performance in another category. Our choice of high weights for expression and network information was governed by the need to increase the ability of the C$^3$ algorithm to detect biologically significant clusters. Furthermore, the \emph{patient coverage} criteria appears to be less relevant than \emph{pathway coverage} and some other coverage properties that have not been explicitly investigated in the literature. 
We used four statistical methods to assess the performance of the algorithms which reflect both the statistical and biological significance of the clusters found. 

\textbf{Mutual Exclusivity.} To evaluate the degree of mutual exclusivity in a cluster, we performed a Fisher Exact Test~\citep{fisher1922mathematical} for each pair of genes in the cluster. The Fisher Exact Test uses the hypergeometric distribution to calculate the probability of observing a $2 \times 2$ contingency table of $n$ total samples, with $a$ samples that has an alteration in two genes (say, $g_i$ and $g_j$), $b$ samples with an alteration in gene $g_i$ only, and $c$ samples with an alteration in gene $g_j$ only. If $d$ is the number of samples with no alteration in either gene, then the probability of co-mutation is evaluated according to
\begin{equation}
    P(g_i,g_j) = \frac{ \binom{a + b}{a} \binom{c + d}{c}}
    {\binom{n}{a + c}}
\end{equation}
We evaluated the overall exclusivity of a cluster as the median value of each pairwise test for each pair of genes $g_i,g_j$ in the network. The pairwise Fisher's method has also been used by Mutex suite to establish mutual exclusivity~\citep{Bahur2015}. However, because in our context the Fisher Exact Test is used as an evaluation rather than discovery tool, we used the median pairwise $p$-value rather than the maximum $p$-value to get a better sense of the overall exclusivity of genes in a cluster. It is also important to note that while CoMEt has an built-in method that generalizes the exclusivity test to $2^k$ contingency table for a cluster size $k$, the exponential size of their test set makes evaluation for large cluster sizes computationally impractical. An alternative test for overall mutual exclusivity is a permutation test, as implemented by MEMo, which compares the exclusivity of a gene set by sampling random gene sets and comparing the patients with multiple alterations.

\textbf{Coverage.} To compare and evaluate the overall coverage of a cluster found by C$^3$ or CoMEt, we calculated the proportion of patients with at least one alteration in a gene belonging to the given cluster.

\textbf{Network Clustering.} We also performed a pathway analysis for the potential drivers. As pointed out in the previous section, driver genes tend to be, on average, closer to each other in a pathway compared to randomly selected genes. Using this dogma, we calculated the average pairwise distance between each each pair of genes $g_i,g_j$ within a discovered cluster using Dijkstra's Algorithm. As before, our tests were performed on 8726 genes from~\citep{Ciriello12}.
%\textcolor{red}{Can you comment on the possibility of using the 58 mini pathways in for this case?}

\textbf{Biological Significance.} In addition to testing the performance of the algorithm with respect to mutual exclusivity and coverage, we also investigated the biological significance of the C$^3$ and CoMEt methods from the perspective of gene discovery and pathway analysis. Although there is no overarching gold standard to determine biological significance, a commonly accepted metric employed by MEMo, Dendrix, Mutex, CoMEt and other similar tools is to count the number of known driver genes found within \emph{the best clusters} according to the mutual exclusivity principle. These clusters usually contain several known driver genes. To determine the driver gene-based biological significance, we calculated the proportion of drivers found in the ten most mutually-exclusive C$^3$ and CoMEt clusters using a comprehensive, curated list of known drivers from the Cancer Gene Census~\citep{cgc}.

It is important to point out that while the four benchmarks we define are a reliable way to test the performance of CoMEt and C$^3$, no perfect benchmark exists for detecting mutually exclusive and biologically significant genes clusters. As with many previous methods regarding this topic, the criteria are in a sense circular in that some of the same parameters that we maximize are the parameters that we use to evaluate the method. This is the reason why we use multiple benchmarks to evaluate the method.

\section{Results}\label{sec:results}
In what follows, we demonstrate that C$^3$ outperforms CoMEt in almost all of the aforementioned benchmarking criteria. As a rule of thumb, C$^3$ can be made
to outperform CoMEt \emph{in any chosen single criteria or pairs of criteria} by adjusting the weights appropriately. This observation follows from that fact that the weights trade off the strengths of different modeling assumptions. We supplement our statistical analysis with a discussion of the biological relevance of our findings, and explore the role of the new potential drivers found by C$^3$ within their driver gene communities. In particular, we discuss the significance of large mutually exclusive clusters that cannot be recovered by other methods. Recall that we restrict our attention to the ten best performing clusters according to mutual exclusivity, as this approach was used in the original evaluation process of the CoMEt algorithm.

%%%%%%%%%%%%%%%%%%%%%%%%%
\begin{figure*}[!tpb]
\begin{center}
\includegraphics[width=1\textwidth]{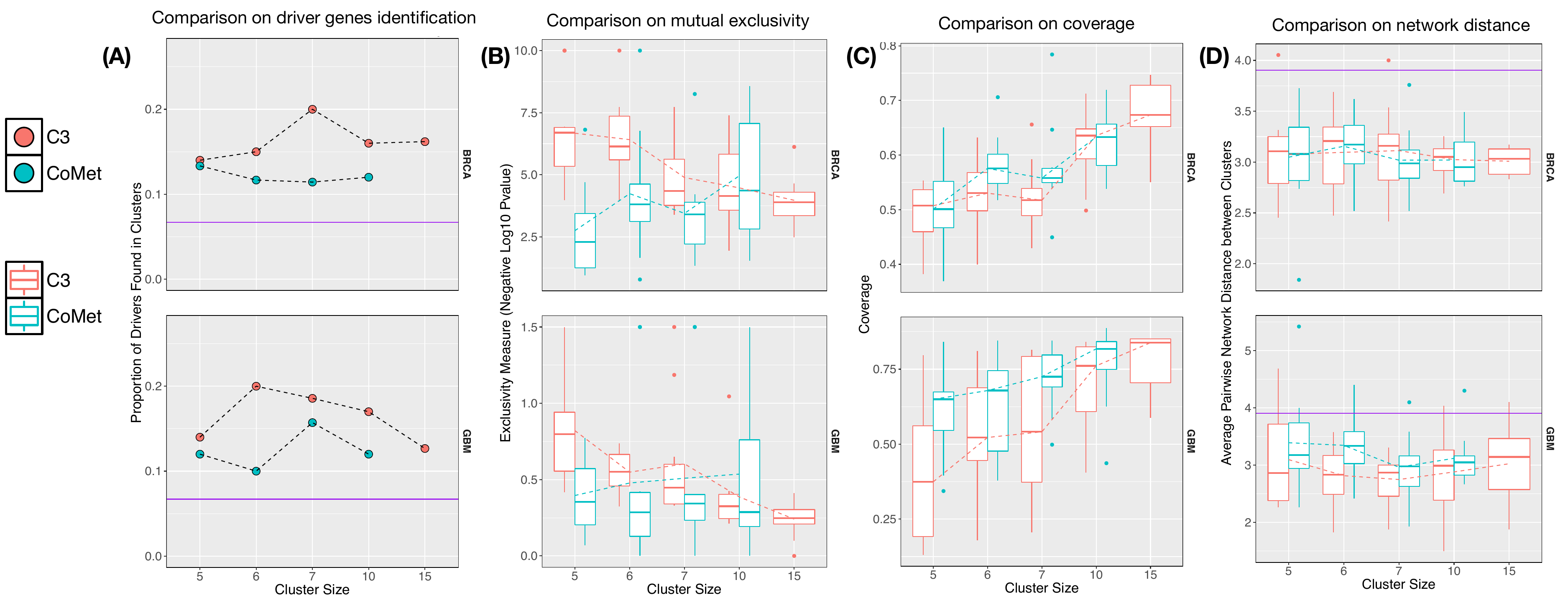} 
\end{center}
\caption{A comparative analysis of C$^3$ (Red) and CoMEt (Blue) based on four evaluation criteria. We used five cluster sizes ($5,6,7,10,$ and $15$) that index the x-axis in each benchmark test. 
\textbf{(A)} depicts the results based on the driver gene evaluation criteria. The y-axis represents the proportion of drivers found by each method, contained within the best ten  clusters found. The purple line represents the expected value of drivers detected if clusters are randomly selected. 
\textbf{(B)} shows the pairwise mutual exclusivity of each run. The y-axis represents the negative log transform of the mutual exclusive p-value such that larger values are more mutually exclusive than smaller ones. 
The boxplots illustrate the distribution of exclusivity results concerning each of the top ten individual clusters for C$^3$ and CoMEt. 
\textbf{(C)} shows the distribution of coverage, measured by proportion of samples with at least one alteration in a given cluster (the y-axis). The boxplot illustrates the distribution of coverage results for individual top ten cluster results. 
\textbf{(D)} includes the network connectivity results of C$^3$ and CoMEt. The y-axis measures the average pairwise network distance between all genes in a cluster, and the distribution of each cluster is shown in the boxplot. The purple line represents the average pairwise distance of random clusters.}
\label{fig:comet}
\end{figure*}
%%%%%%%%%%%%%%%%%%%%%%%%%

\subsection{Performance evaluation}\label{sec:Performance}

The results of our extensive comparison between C$^3$ and CoMEt, regarding mutual exclusivity, coverage, driver identification, and pathway-level evaluation, are shown in Fig.~\ref{fig:comet}. Both algorithms were tested on the same server with a 256GB RAM memory. Both methods ran uninterruptedly when the cluster sizes were constrained to $k=5, 6, 7,$ and $10$. CoMet reported segfault memory errors for $k=15$, and for this case, only C$^3$ was benchmarked. 

To assess the biological significance of the two methods in terms of their ability to cluster high-impact drivers from the CGC repository together, we compared the results of C$^3$ and CoMEt both to each other and to a ``baseline'' value equal to the average proportion of drivers in the ten most mutually-exclusive clusters found, in this case $0.067$, using uniform random sampling of genes (see Fig.~\ref{fig:comet}A).
In BRCA, we found that C$^3$ detected a median driver proportion of $0.160$ and CoMEt detected a median driver proportion of $0.117$ in the top ten clusters. 
C$^3$ outperforms CoMEt for each cluster size. We also used a Mann-Whitney Rank Sum test~(\citealp{rosner1999use}) to compare the overall performance of the algorithms with respect to mutual exclusivity, for all cluster sizes. 
We chose a rank-sum test because it is unclear that the drivers are following a normal distribution due to the small amount of data available. 
The results show that C$^3$ outperforms CoMEt ($p$-value of $0.0079$) in terms of amount of drivers in clusters. 
C$^3$ also outperforms CoMEt on GBM, with a median proportion of drivers per cluster equal to $0.170$, compared to a $0.12$ proportion of drivers per cluster found by CoMEt. 
This finding holds for every cluster size, with a rank-sum test $p$-value of $0.0361$. 
Both methods succeed in finding biologically significant drivers within clusters exhibiting high mutual exclusivity, and both methods significantly outperform the expected number of drivers per cluster in the random setting ($p$-value $1.594 \times 10^{-5}$ and $p$-value $1.312\times10^{-3}$ for C$^3$ and CoMEt, respectively). 

We next tested the clusters found by each method based on their mutual exclusivity (see Fig.~\ref{fig:comet}B). 
To do so, we used the previously described pairwise Fisher's exact test to obtain a p-value for each of the top ten clusters of the two methods. 
For better visualization, we performed a negative log transform on the $p$-values, and plotted the transformed $p$-value distribution. Hence, in this system, larger values indicate more mutually exclusivity. 
We again used a Mann-Whitney rank-sum test to evaluate the performance of C$^3$ and CoMEt. 
For BRCA, one can see that while both methods have significant median exclusivity values ($p = 7.541 \times 10^{-6}$ for C$^3$ and $p = 3.337 \times 10^{-4}$ for CoMEt), C$^3$ has an overall more significant p-values for each cluster size. The median $p$-value of C$^3$ for each cluster size is lower than its CoMEt counterpart except for the case $k=10$. 
However, C$^3$ does have superior performance overall with a rank-sum p-value of $p = 4.0202 \times 10^{-4}$. 
For GBM, the median exclusivity results are not as strong as for the BRCA set, for both the C$^3$ and CoMEt method. 
C$^3$ has a median $p$-value of $0.3795$ as opposed to CoMEt's $0.5022$. 
The general drop in significance may be attributed to a lower confidence of the Fisher test due to a small number of samples available; recall that the GBM set involved 291 samples, compared to 959 BRCA samples. 
This indicates that one should look at individual significant clusters to evaluate mutual exclusivity. Even for the reduced median p-value regime, C$^3$ outperforms CoMEt in significance, having lower median $p$-values for each cluster size. 
Overall, the C$^3$ $p$-values are consistently and significantly lower than those produced by CoMEt for mutual exclusivity (the rank-sum test $p$-value equals $0.04401$).

The results of the coverage tests are depicted in Fig.~\ref{fig:comet}C. 
In the coverage benchmark, CoMEt outperforms C$^3$ for GBM, but neither method outperforms the other BRCA. 
In BRCA, both methods show comparable performance, with a median result for the fraction of samples covered equal to $0.5505$ for C$^3$, and $0.5662$ for CoMEt. This rather poor performance of both methods is observed for all values of $k$, with no $p$-value based on a Student T-test~\citealp{zimmerman1987comparative} being less than $0.05$. The largest difference in coverage recorded for the two methods is present for $k=6$. 
In conclusion, there appears to be no statistical difference between C$^3$ and CoMEt in terms of BRCA coverage percentage ($p$-value of $0.5127$).
In GBM, the median $p$-value for coverage difference is more pronounced. 
The median coverage of C$^3$ is $0.632$ and the median coverage of CoMEt is $0.696$. 
CoMEt finds significantly higher-coverage clusters according to the Student T-test, with $p$-value $0.0345$, and the most pronounced coverage percentage differences exist for small values of $k$ ($0.3745$ vs. $0.6495$ for $k=5$ C$^3$ and CoMEt, respectively).
It is also important to note the wide distribution of coverage score values produced by C$^3$ for small $k$: the IQR (Interquartile range) value is roughly $0.35$ for $k=5$. 
The most likely reason behind this result is that our test weights were chosen to boost the relevance of mutual-exclusivity and biological significance rather than coverage. Mutual exclusivity accounts for 100$\%$ of the negative weights of edges, while coverage accounts for only 16.7$\%$ of the positive weights. We justify this weight choice by the fact that it leads to multiple significant cluster discovery and by our belief that coverage is a less significant driver property compared to mutual exclusivity. We also emphasize that a potentially biologically more relevant coverage constraint should pertain to important pathway, rather than patient sample coverage.

As already mentioned in the previous sections, one advantage of C$^3$ is that the user can adjust the weights according to her/his own belief about the significance of patient coverage. 
For example, by changing the averaging weights in our GBM run to $w_1=0.60$ (coverage), $w_2=0.20$ (network), and $w_3=0.20$ (expression), we obtain a coverage percentage of $0.7903$ for $k=5$. 
However, this excellent coverage comes at a cost of a less significant mutual exclusivity score (fractional value $0.4288$) and a lower proportion of detected drivers (fractional value $0.1267$). 
As may be seen from the above example, C$^3$ is highly customizable and can be adapted to the user's specification to best reflect the scope and preferences of the analysis.

The last setting in which we analyzed C$^3$ and CoMEt uses the distances of drivers in the network as performance criteria (see Fig.~\ref{fig:comet}D). 
Here, we calculated the average pairwise distance between all pairs of genes clustered together. We used the Student T-test to determine the statistical significance of this value. 
We also compared the values for both algorithms based on $1000$ randomly selected genes by using a permutation test. 
For BRCA, we found no significant performance difference between the two methods in terms of the average pairwise distance: $3.110$ for C$^3$ and $3.070$ for CoMEt, with a $p$-value of $0.9330$. 
In GBM, C$^3$ showed a smaller average pairwise distance of $2.908$ compared to CoMEt's $3.097$. 
This difference is statistically significant, with a $p$-value of $0.0379$. 
The small average network distance results of C$^3$ for GBM, coupled with the low coverage, leads to the conclusion that C$^3$ favors niche, exclusive clusters in biologically relevant cancer pathways. Hence, the method may be useful for discovering specific molecular cancer subtypes.
Both methods had an average pairwise distance well below the permutation benchmark of $3.903$: the $p$-values of both C$^3$ and CoMEt were less than $2 \times 10^{-16}$ for each cancer type.

In conclusion, from our detailed evaluation we conclude that although C$^3$ does not outperform CoMEt with respect to all four evaluation criteria, but only three of them, the C$^3$ performance indicates a strong overall propensity to select biologically more relevant and mutually exclusive clusters, and with a higher degree of flexibility compared to CoMEt. 

\subsection{Discovering potential driver pathways}\label{sec:Discovery}

We examine next the potential of the C$^3$ algorithm to detect clusters whose genes may be novel cancer driver candidates. 
We focus our search on clusters that contain biologically significant driver genes and known biological network interactions, and exhibit high mutual exclusivity and coverage. 
We also focused on the large cluster size regime, as results in this domain have not been previously reported in the literature.  
Two examples are shown in Fig.~\ref{fig:brca} and Fig.~\ref{fig:gbm}.

In BRCA, one candidate cluster with several potential novel driver genes is the cluster containing \textit{PTEN, HUWE1, CNTNAP2, GRID2, CACNA1B, CYSLTR2, MYH1} depicted in Fig.~\ref{fig:brca}. The genes in the candidate cluster are mutually exclusive ($\text{$p$-value} = 0.0084$).
The genome landscape of this cluster is dominated primarily by mutations in \textit{PTEN} and \textit{HUWE1}, and secondarily by homozygous deletions in \textit{PTEN} and \textit{CYSLTR2}. 
The most frequently altered gene in this set is a common driver gene \textit{PTEN}, a tumor suppressor gene that negatively regulates the AKT/PKB apoptosis pathway~\citep{stambolic1998negative}. 
The remaining six genes in the cluster are potential driver candidates. \textit{HUWE1} is a part of the Mule multidomain complex of the HECT domain family of E3 ubiquitin ligases responsible for apoptosis suppression, DNA damage repair, and transcriptional regulation~\citep{inoue2013mule}. 
\textit{CNTNAP2} is a neurexin protein with functions in cell-to-cell adhesion and epidermal growth factor and was found to be hypomethylated in breast cancer cell lines~\citep{shann2008genome}. 
Hypomethylation and the association with epidermal growth factors, coupled with a large number of amplifications in the alteration landscape of \textit{CNTNAP2} suggest potential oncogenic functions of the gene. 
\textit{GRID2} is an ionotropic glutamate receptor that is frequently deleted in lymphomas~\citep{roy2011tumor}.
\textit{CACNA1B} codes for a N-type calcium channel which is responsible for calcium influx. Defects in the calcium influx channel can lead to alteration in the apoptosis, proliferation, migration and invasion pathways of breast cancer~\citep{azimi2014calcium}.
\textit{CYSLTR2} is a proinflammatory cysteinyl leukotriene receptor that plays a role in cancer cell differentiation and is associated with breast cancer survival rates~\citep{magnusson2011cysteinyl}. 
MYH1 is a myosin heavy chain protein that plays a role in cell signaling and pro-apotosis pathways.
%\textcolor{red}{( PMID: 18796164)}.

Perhaps more important than the propensity of each individual gene to be a driver is the collective interaction pattern of the seven genes in the cluster in a cancer pathway. 
From Fig.~\ref{fig:brca}, it is clear that the each gene in the cluster interacts with each other in a tightly-connected community with no gene more than three nodes away when plotted in the network, using the cBioPortal visualization tool. 
The seven genes in the cluster \textit{PTEN, HUWE1, CNTNAP2, GRID2, CACNA1B, CYSLTR2, MYH1} are strong candidates to define a novel driver pathway. 
This conclusion is reinforced by the presence of high impact common drivers (\textit{TP53, MYC, AKT, and PIK3R1}) which define several important cancer pathways such as apoptosis, DNA repair, and cell cycle arrest~\citep{vazquez2008genetics,stemke2008integrative}.

%\CC{We cannot do this now, but it would be important to see on which paths the genes lie, if they are spread across some important network cuts, do they define network bottlenecks etc}
% JM: Yes, agree!

%\CC{noticed that we used BRCA and breast cancer randomly, maybe say once breast cancer = BRCA, and stick to it}

We also examined a cluster containing potential cancer drivers relevant for GBM.
In GBM, we found a cluster of size $10$ with four known drivers and many potential drivers. 
The cluster includes \textit{GLI1, WNT2, BRAF, PLCG1, FAS, CREBBP, BRCA2, GLI2, PIK3R5, VAMP3} (see Fig.~\ref{fig:gbm}). 
This large cluster has a $p$-value of $0.0901$ in terms of mutual exclusivity, which is actually low as compared to other GBM clusters. 
The cluster also contains several important driver genes such as \textit{WNT2, BRAF, BRCA2} and \textit{CREBBP} which encompass pathways such as sonic hedgehog signaling, cell fate determination, cell growth and apoptosis, checkpoint activation, and DNA repair. 
Additionally, six out of the ten members are within the same compact network community (\textit{GLI1, PLCG1, FAS, CREBBP, BRCA2, PIK3R5}). 
Of these six genes, GLI1 and GLI2 are hedgehog signaling genes that are common and first isolated in glioblastoma. These genes are responsible for cell differentiation and stem cell self-renewal~\citep{clement2007hedgehog}. 
\textit{PLCG1} is involved in intracellular transduction of receptor-mediated tyrosine kinase activators, and it has been classified as a biomarker in GBM~\citep{serao2011cell}.
\textit{FAS} is a cell surface receptor that mediates apoptosis. \textit{FAS} is known as a histological hallmark of GBM, affecting both apoptosis and necrosis factors~\citep{gratas1997fas}. 
Finally, \textit{PIK3R5} is a subunit of phosphatidylinositol 3-kinases who together have important effects on cell growth, proliferation, differentiation, motility, survival and intracellular trafficking. 

%%%%%%%%%%%%%%%%%%%%%%%%%%%%
\begin{figure}[!tpb]
\centering
\includegraphics[width=0.7\textwidth]{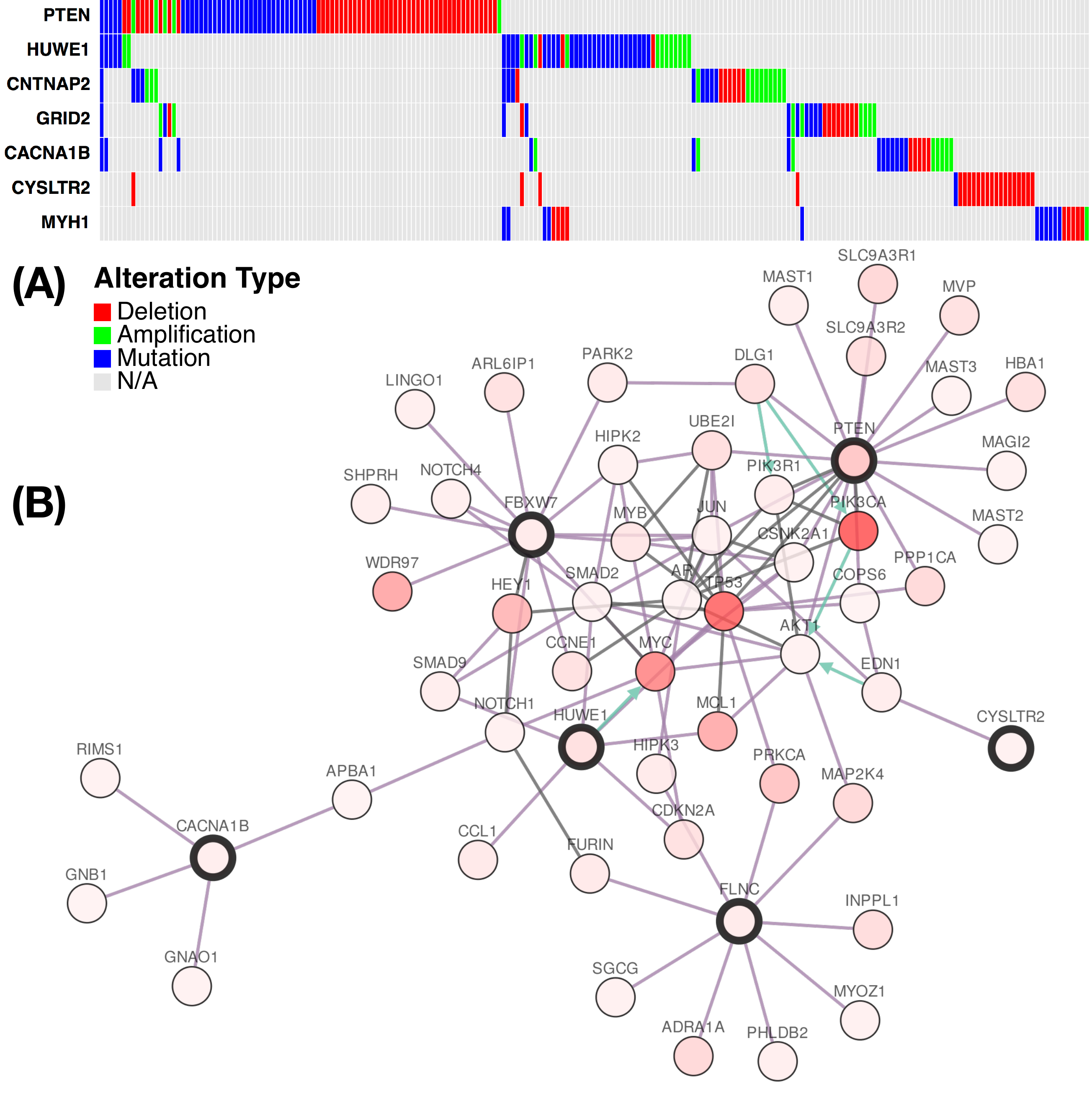} 
\caption{A cluster of potential driver genes inferred from BRCA. 
\textbf{(A)} shows the alteration landscape of the cluster, with blue representing mutation events, red representing copy number deletions, and green representing copy number amplifications.
\textbf{(B)} represents a \emph{known} subnetwork which contains 6 genes (out of 7) in \textbf{(A)}. 
The more intense the red, the higher the alteration frequency of the gene. 
Nodes highlighted in black represent driver candidates identified by C$^3$ within a small subnetwork.
Edges are depicted in black if there exists a direct interaction between two genes. 
Green edges represent an interaction that undergoes a protein state change.
Purple edges are other interactions.
}
\label{fig:brca}
\end{figure}

\begin{figure}[!tpb]
\centering
\includegraphics[width=0.6\textwidth]{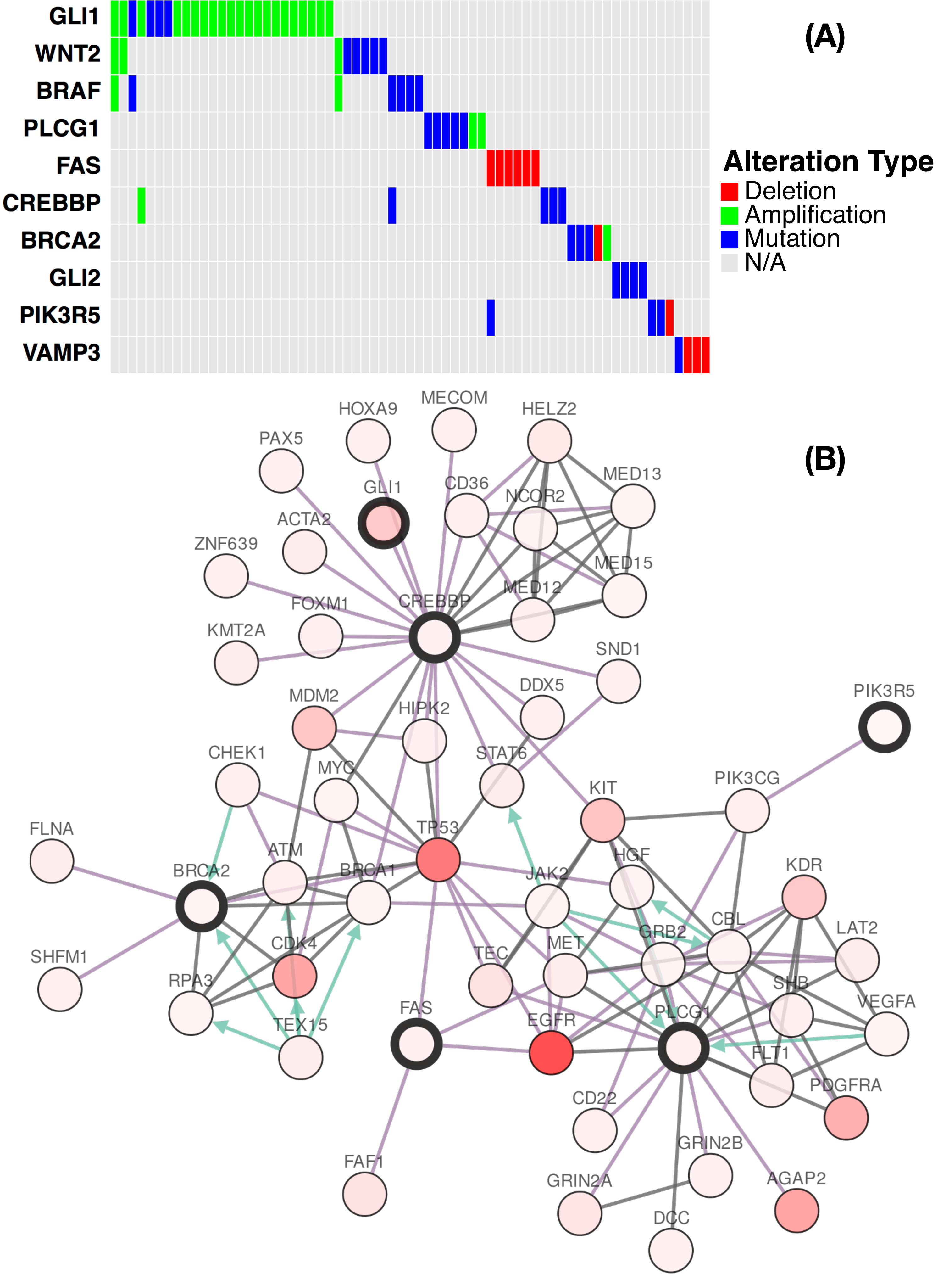}
\caption{A cluster of potential driver genes inferred from GBM. 
\textbf{(A)} shows the alteration landscape of the cluster, with blue representing mutation events, red representing copy number deletions, and green representing copy number amplifications. 
\textbf{(B)} represents a \emph{known} subnetwork which contains 6 genes (out of 10) in \textbf{(A)}. 
The more intense the red, the higher the alteration frequency of the gene. 
Nodes highlighted in black represent driver candidates identified by C$^3$ within a small subnetwork.
Edges are depicted in black if there exists a direct interaction between two genes. 
Green edges represent an interaction that undergoes a protein state change.
Purple edges are other interactions.
}
\label{fig:gbm}
\end{figure}

%%%%%%%%%%%%%%%%%%%%%%%%%%%%

%%%%%%%%%%%%%%%%%%%%%%%%%%%%

%%%%%%%%%%%%%%%%%%%%%%%%%%%%

\section{Discussion and Conclusion} \label{sec:discussion}

We described a novel method, termed C$^3$, which has the potential to precisely and efficiently identify clusters of gene modules with mutually exclusive mutation patterns. The C$^3$ algorithm uses large-scale cancer genomics datasets which are pre-processed to yield parameters governing novel constrained correlation clustering techniques. The optimization criteria used in clustering include patterns of mutual exclusivity of mutations, patient sample coverage, and network driver concentration.

There are several major advancements of our method when compared to previously known approaches. Unlike methods that use randomized approaches without the guarantee that multiple runs of the methods on the same data will produce compatible results (such as CoMEt), C$^3$ is consistent. Also, C$^3$ has a complexity that does not depend on the chosen cluster sizes, and is hence much more appropriate for large cluster problems than other methods. Furthermore, it partitions the gene set and hence creates clusters covering all genes used in the analysis. This is to be contrasted with the results produced by other methods that tend to identify only a small number of modules with limited number of genes.

None of the previous methods were able to identify clusters utilizing different sources of information via a weighting mechanism. 
This is important because it gives us flexibility to focus more on certain aspects based on the analysis. For example, we can focus more on mutual exclusivity instead of coverage to identify clusters specific to a group of samples which may facilitate the discovery of subtype-specific modules.

By addressing the above challenges, we believe our new method C$^3$ represents a unique tool to  efficiently and reliably identify mutation patterns and driver pathways in large-scale cancer genomics studies. 

\section*{Acknowledgement}
We thank Mark Leiserson in Ben Raphael's lab at Brown University for assistance in running CoMEt.

\bibliography{document}

\begin{thebibliography}{43}
\providecommand{\natexlab}[1]{#1}
\providecommand{\url}[1]{\texttt{#1}}
\expandafter\ifx\csname urlstyle\endcsname\relax
  \providecommand{\doi}[1]{doi: #1}\else
  \providecommand{\doi}{doi: \begingroup \urlstyle{rm}\Url}\fi

\bibitem[Azimi et~al.(2014)Azimi, Roberts-Thomson, and
  Monteith]{azimi2014calcium}
I.~Azimi, S.~Roberts-Thomson, and G.~Monteith.
\newblock Calcium influx pathways in breast cancer: opportunities for
  pharmacological intervention.
\newblock \emph{British journal of pharmacology}, 171\penalty0 (4):\penalty0
  945--960, 2014.

\bibitem[Babur et~al.(2015)Babur, Gonen, Aksoy, Schultz, Ciriello, Sander, and
  Demir]{Bahur2015}
O.~Babur, M.~Gonen, B.~A. Aksoy, N.~Schultz, G.~Ciriello, C.~Sander, and
  E.~Demir.
\newblock {{S}ystematic identification of cancer driving signaling pathways
  based on mutual exclusivity of genomic alterations}.
\newblock \emph{Genome Biol.}, 16:\penalty0 45, 2015.

\bibitem[Bansal et~al.(2004)Bansal, Blum, and Chawla]{Bansal2004}
N.~Bansal, A.~Blum, and S.~Chawla.
\newblock Correlation clustering.
\newblock \emph{Machine Learning}, 56\penalty0 (1-3):\penalty0 89--113, 2004.

\bibitem[Bashashati et~al.(2012)Bashashati, Haffari, Ding, Ha, Lui, Rosner,
  Huntsman, Caldas, Aparicio, and Shah]{bashashati2012}
A.~Bashashati, G.~Haffari, J.~Ding, G.~Ha, K.~Lui, J.~Rosner, D.~G. Huntsman,
  C.~Caldas, S.~A. Aparicio, and S.~P. Shah.
\newblock Drivernet: uncovering the impact of somatic driver mutations on
  transcriptional networks in cancer.
\newblock \emph{Genome Biol}, 13\penalty0 (12):\penalty0 R124, 2012.

\bibitem[Brennan et~al.(2013)Brennan, Verhaak, McKenna, Campos, Noushmehr,
  Salama, Zheng, Chakravarty, Sanborn, Berman, et~al.]{GBM}
C.~W. Brennan, R.~G. Verhaak, A.~McKenna, B.~Campos, H.~Noushmehr, S.~R.
  Salama, S.~Zheng, D.~Chakravarty, J.~Z. Sanborn, S.~H. Berman, et~al.
\newblock The somatic genomic landscape of glioblastoma.
\newblock \emph{Cell}, 155\penalty0 (2):\penalty0 462--477, 2013.

\bibitem[Carter et~al.(2009)Carter, Chen, Isik, Tyekucheva, Velculescu,
  Kinzler, Vogelstein, and Karchin]{carter2009}
H.~Carter, S.~Chen, L.~Isik, S.~Tyekucheva, V.~E. Velculescu, K.~W. Kinzler,
  B.~Vogelstein, and R.~Karchin.
\newblock Cancer-specific high-throughput annotation of somatic mutations:
  computational prediction of driver missense mutations.
\newblock \emph{Cancer research}, 69\penalty0 (16):\penalty0 6660--6667, 2009.

\bibitem[Charikar et~al.(2003)Charikar, Guruswami, and Wirth]{Charikar2003}
M.~Charikar, V.~Guruswami, and A.~Wirth.
\newblock Clustering with qualitative information.
\newblock In \emph{Foundations of Computer Science, 2003. Proceedings. 44th
  Annual IEEE Symposium on}, pages 524--533. IEEE, 2003.

\bibitem[Charikar et~al.(2005)Charikar, Guruswami, and Wirth]{Charikar2005}
M.~Charikar, V.~Guruswami, and A.~Wirth.
\newblock Clustering with qualitative information.
\newblock \emph{Journal of Computer and System Sciences}, pages 360--383, 2005.

\bibitem[Ciriello et~al.(2012)Ciriello, Cerami, Sander, and
  Schultz]{Ciriello12}
G.~Ciriello, E.~Cerami, C.~Sander, and N.~Schultz.
\newblock Mutual exclusivity analysis identifies oncogenic network modules.
\newblock \emph{Genome research}, 22\penalty0 (2):\penalty0 398--406, 2012.

\bibitem[Clement et~al.(2007)Clement, Sanchez, De~Tribolet, Radovanovic, and
  i~Altaba]{clement2007hedgehog}
V.~Clement, P.~Sanchez, N.~De~Tribolet, I.~Radovanovic, and A.~R. i~Altaba.
\newblock Hedgehog-gli1 signaling regulates human glioma growth, cancer stem
  cell self-renewal, and tumorigenicity.
\newblock \emph{Current biology}, 17\penalty0 (2):\penalty0 165--172, 2007.

\bibitem[Dees et~al.(2012)Dees, Zhang, Kandoth, Wendl, Schierding, Koboldt,
  Mooney, Callaway, Dooling, Mardis, et~al.]{dees2012}
N.~D. Dees, Q.~Zhang, C.~Kandoth, M.~C. Wendl, W.~Schierding, D.~C. Koboldt,
  T.~B. Mooney, M.~B. Callaway, D.~Dooling, E.~R. Mardis, et~al.
\newblock Music: identifying mutational significance in cancer genomes.
\newblock \emph{Genome research}, 22\penalty0 (8):\penalty0 1589--1598, 2012.

\bibitem[Fisher(1922)]{fisher1922mathematical}
R.~A. Fisher.
\newblock On the mathematical foundations of theoretical statistics.
\newblock \emph{Philosophical Transactions of the Royal Society of London.
  Series A, Containing Papers of a Mathematical or Physical Character}, pages
  309--368, 1922.

\bibitem[Futreal et~al.(2004)Futreal, Coin, Marshall, Down, Hubbard, Wooster,
  Rahman, and Stratton]{cgc}
P.~A. Futreal, L.~Coin, M.~Marshall, T.~Down, T.~Hubbard, R.~Wooster,
  N.~Rahman, and M.~R. Stratton.
\newblock A census of human cancer genes.
\newblock \emph{Nature Reviews Cancer}, 4\penalty0 (3):\penalty0 177--183,
  2004.

\bibitem[Gao et~al.(2013)Gao, Aksoy, Dogrusoz, Dresdner, Gross, Sumer, Sun,
  Jacobsen, Sinha, Larsson, Cerami, Sander, and Schultz]{cbioportal}
J.~Gao, B.~A. Aksoy, U.~Dogrusoz, G.~Dresdner, B.~Gross, S.~O. Sumer, Y.~Sun,
  A.~Jacobsen, R.~Sinha, E.~Larsson, E.~Cerami, C.~Sander, and N.~Schultz.
\newblock {{I}ntegrative analysis of complex cancer genomics and clinical
  profiles using the c{B}io{P}ortal}.
\newblock \emph{Sci Signal}, 6\penalty0 (269):\penalty0 pl1, Apr 2013.

\bibitem[Garcia-Alonso et~al.(2014)Garcia-Alonso, Jim{\'e}nez-Almaz{\'a}n,
  Carbonell-Caballero, Vela-Boza, Santoyo-L{\'o}pez, Anti{\~n}olo, and
  Dopazo]{GarciaAlonso2014}
L.~Garcia-Alonso, J.~Jim{\'e}nez-Almaz{\'a}n, J.~Carbonell-Caballero,
  A.~Vela-Boza, J.~Santoyo-L{\'o}pez, G.~Anti{\~n}olo, and J.~Dopazo.
\newblock The role of the interactome in the maintenance of deleterious
  variability in human populations.
\newblock \emph{Molecular systems biology}, 10\penalty0 (9):\penalty0 752,
  2014.

\bibitem[Gonzalez-Perez and Lopez-Bigas(2012)]{gonzalez2012}
A.~Gonzalez-Perez and N.~Lopez-Bigas.
\newblock Functional impact bias reveals cancer drivers.
\newblock \emph{Nucleic acids research}, page gks743, 2012.

\bibitem[Gratas et~al.(1997)Gratas, Tohma, Meir, Klein, Tenan, Ishii,
  Tachibana, Kleihues, and Ohgaki]{gratas1997fas}
C.~Gratas, Y.~Tohma, E.~G.~V. Meir, M.~Klein, M.~Tenan, N.~Ishii, O.~Tachibana,
  P.~Kleihues, and H.~Ohgaki.
\newblock Fas ligand expression in glioblastoma cell lines and primary
  astrocytic brain tumors.
\newblock \emph{Brain pathology}, 7\penalty0 (3):\penalty0 863--869, 1997.

\bibitem[Hanahan and Weinberg(2011)]{hanahan2011}
D.~Hanahan and R.~A. Weinberg.
\newblock Hallmarks of cancer: the next generation.
\newblock \emph{Cell}, 144\penalty0 (5):\penalty0 646--674, 2011.

\bibitem[Hartigan and Wong(1979)]{Hartigan1979}
J.~A. Hartigan and M.~A. Wong.
\newblock Algorithm as 136: A k-means clustering algorithm.
\newblock \emph{Applied statistics}, pages 100--108, 1979.

\bibitem[Hou and Ma(2014)]{hou2014}
J.~P. Hou and J.~Ma.
\newblock Dawnrank: discovering personalized driver genes in cancer.
\newblock \emph{Genome Med}, 6\penalty0 (7):\penalty0 56, 2014.

\bibitem[Inoue et~al.(2013)Inoue, Hao, Elia, Cescon, Zhou, Silvester, Snow,
  Harris, Sasaki, Li, et~al.]{inoue2013mule}
S.~Inoue, Z.~Hao, A.~J. Elia, D.~Cescon, L.~Zhou, J.~Silvester, B.~Snow, I.~S.
  Harris, M.~Sasaki, W.~Y. Li, et~al.
\newblock Mule/huwe1/arf-bp1 suppresses ras-driven tumorigenesis by preventing
  c-myc/miz1-mediated down-regulation of p21 and p15.
\newblock \emph{Genes \& development}, 27\penalty0 (10):\penalty0 1101--1114,
  2013.

\bibitem[Lawrence et~al.(2013)Lawrence, Stojanov, Polak, Kryukov, Cibulskis,
  Sivachenko, Carter, Stewart, Mermel, Roberts, et~al.]{lawrence2013}
M.~S. Lawrence, P.~Stojanov, P.~Polak, G.~V. Kryukov, K.~Cibulskis,
  A.~Sivachenko, S.~L. Carter, C.~Stewart, C.~H. Mermel, S.~A. Roberts, et~al.
\newblock Mutational heterogeneity in cancer and the search for new
  cancer-associated genes.
\newblock \emph{Nature}, 499\penalty0 (7457):\penalty0 214--218, 2013.

\bibitem[Leiserson et~al.(2013)Leiserson, Blokh, Sharan, and
  Raphael]{Leiserson2013}
M.~D. Leiserson, D.~Blokh, R.~Sharan, and B.~J. Raphael.
\newblock {{S}imultaneous identification of multiple driver pathways in
  cancer}.
\newblock \emph{PLoS Comput. Biol.}, 9\penalty0 (5):\penalty0 e1003054, 2013.

\bibitem[Leiserson et~al.(2015)Leiserson, Vandin, Wu, Dobson, Eldridge, Thomas,
  Papoutsaki, Kim, Niu, McLellan, et~al.]{Leiserson2015}
M.~D. Leiserson, F.~Vandin, H.-T. Wu, J.~R. Dobson, J.~V. Eldridge, J.~L.
  Thomas, A.~Papoutsaki, Y.~Kim, B.~Niu, M.~McLellan, et~al.
\newblock Pan-cancer network analysis identifies combinations of rare somatic
  mutations across pathways and protein complexes.
\newblock \emph{Nature genetics}, 47\penalty0 (2):\penalty0 106--114, 2015.

\bibitem[Magnusson et~al.(2011)Magnusson, Liu, Ehrnstr{\"o}m, Manjer,
  Jirstr{\"o}m, Andersson, and Sj{\"o}lander]{magnusson2011cysteinyl}
C.~Magnusson, J.~Liu, R.~Ehrnstr{\"o}m, J.~Manjer, K.~Jirstr{\"o}m,
  T.~Andersson, and A.~Sj{\"o}lander.
\newblock Cysteinyl leukotriene receptor expression pattern affects migration
  of breast cancer cells and survival of breast cancer patients.
\newblock \emph{International Journal of Cancer}, 129\penalty0 (1):\penalty0
  9--22, 2011.

\bibitem[Manolakos et~al.(2014)Manolakos, Ochoa, Venkat, Goldsmith, and
  Gevaert]{manolakos2014camodi}
A.~Manolakos, I.~Ochoa, K.~Venkat, A.~J. Goldsmith, and O.~Gevaert.
\newblock Camodi: a new method for cancer module discovery.
\newblock \emph{BMC genomics}, 15\penalty0 (Suppl 10):\penalty0 S8, 2014.

\bibitem[Mermel et~al.(2011)Mermel, Schumacher, Hill, Meyerson, Beroukhim, and
  Getz]{Mermel2011}
C.~H. Mermel, S.~E. Schumacher, B.~Hill, M.~L. Meyerson, R.~Beroukhim, and
  G.~Getz.
\newblock {{G}{I}{S}{T}{I}{C}2.0 facilitates sensitive and confident
  localization of the targets of focal somatic copy-number alteration in human
  cancers}.
\newblock \emph{Genome Biol.}, 12\penalty0 (4):\penalty0 R41, 2011.

\bibitem[Network et~al.(2012)]{TCGA2012}
C.~G.~A. Network et~al.
\newblock Comprehensive molecular portraits of human breast tumours.
\newblock \emph{Nature}, 490\penalty0 (7418):\penalty0 61--70, 2012.

\bibitem[Ng et~al.(2012)Ng, Collisson, Sokolov, Goldstein, Gonzalez-Perez,
  Lopez-Bigas, Benz, Haussler, and Stuart]{ng2012}
S.~Ng, E.~A. Collisson, A.~Sokolov, T.~Goldstein, A.~Gonzalez-Perez,
  N.~Lopez-Bigas, C.~Benz, D.~Haussler, and J.~M. Stuart.
\newblock Paradigm-shift predicts the function of mutations in multiple cancers
  using pathway impact analysis.
\newblock \emph{Bioinformatics}, 28\penalty0 (18):\penalty0 i640--i646, 2012.

\bibitem[Paull et~al.(2013)Paull, Carlin, Niepel, Sorger, Haussler, and
  Stuart]{paull2013}
E.~O. Paull, D.~E. Carlin, M.~Niepel, P.~K. Sorger, D.~Haussler, and J.~M.
  Stuart.
\newblock Discovering causal pathways linking genomic events to transcriptional
  states using tied diffusion through interacting events (tiedie).
\newblock \emph{Bioinformatics}, 29\penalty0 (21):\penalty0 2757--2764, 2013.

\bibitem[Pe'er and Hacohen(2011)]{pe2011}
D.~Pe'er and N.~Hacohen.
\newblock Principles and strategies for developing network models in cancer.
\newblock \emph{Cell}, 144\penalty0 (6):\penalty0 864--873, 2011.

\bibitem[Porta-Pardo et~al.(2015)Porta-Pardo, Garcia-Alonso, Hrabe, Dopazo, and
  Godzik]{PortaPardo2015}
E.~Porta-Pardo, L.~Garcia-Alonso, T.~Hrabe, J.~Dopazo, and A.~Godzik.
\newblock A pan-cancer catalogue of cancer driver protein interaction
  interfaces.
\newblock \emph{PLoS Comput Biol}, 11\penalty0 (10):\penalty0 e1004518, 2015.

\bibitem[Puleo and Milenkovic(2015)]{Puleo15}
G.~J. Puleo and O.~Milenkovic.
\newblock Correlation clustering with constrained cluster sizes and extended
  weights bounds.
\newblock \emph{SIAM Journal on Optimization}, 25\penalty0 (3):\penalty0
  1857--1872, 2015.

\bibitem[Rosner and Grove(1999)]{rosner1999use}
B.~Rosner and D.~Grove.
\newblock Use of the mann--whitney u-test for clustered data.
\newblock \emph{Statistics in medicine}, 18\penalty0 (11):\penalty0 1387--1400,
  1999.

\bibitem[Roy et~al.(2011)Roy, Sin, Damania, and Dittmer]{roy2011tumor}
D.~Roy, S.-H. Sin, B.~Damania, and D.~P. Dittmer.
\newblock Tumor suppressor genes fhit and wwox are deleted in primary effusion
  lymphoma (pel) cell lines.
\newblock \emph{Blood}, 118\penalty0 (7):\penalty0 e32--e39, 2011.

\bibitem[Ser{\~a}o et~al.(2011)Ser{\~a}o, Delfino, Southey, Beever, and
  Rodriguez-Zas]{serao2011cell}
N.~V. Ser{\~a}o, K.~R. Delfino, B.~R. Southey, J.~E. Beever, and S.~L.
  Rodriguez-Zas.
\newblock Cell cycle and aging, morphogenesis, and response to stimuli genes
  are individualized biomarkers of glioblastoma progression and survival.
\newblock \emph{BMC medical genomics}, 4\penalty0 (1):\penalty0 49, 2011.

\bibitem[Shann et~al.(2008)Shann, Cheng, Chiao, Chen, Li, and
  Hsu]{shann2008genome}
Y.-J. Shann, C.~Cheng, C.-H. Chiao, D.-T. Chen, P.-H. Li, and M.-T. Hsu.
\newblock Genome-wide mapping and characterization of hypomethylated sites in
  human tissues and breast cancer cell lines.
\newblock \emph{Genome research}, 18\penalty0 (5):\penalty0 791--801, 2008.

\bibitem[Skiena(1990)]{skiena1990dijkstra}
S.~Skiena.
\newblock Dijkstra's algorithm.
\newblock \emph{Implementing Discrete Mathematics: Combinatorics and Graph
  Theory with Mathematica, Reading, MA: Addison-Wesley}, pages 225--227, 1990.

\bibitem[Stambolic et~al.(1998)Stambolic, Suzuki, De~La~Pompa, Brothers,
  Mirtsos, Sasaki, Ruland, Penninger, Siderovski, and
  Mak]{stambolic1998negative}
V.~Stambolic, A.~Suzuki, J.~L. De~La~Pompa, G.~M. Brothers, C.~Mirtsos,
  T.~Sasaki, J.~Ruland, J.~M. Penninger, D.~P. Siderovski, and T.~W. Mak.
\newblock Negative regulation of pkb/akt-dependent cell survival by the tumor
  suppressor pten.
\newblock \emph{Cell}, 95\penalty0 (1):\penalty0 29--39, 1998.

\bibitem[Stemke-Hale et~al.(2008)Stemke-Hale, Gonzalez-Angulo, Lluch, Neve,
  Kuo, Davies, Carey, Hu, Guan, Sahin, et~al.]{stemke2008integrative}
K.~Stemke-Hale, A.~M. Gonzalez-Angulo, A.~Lluch, R.~M. Neve, W.-L. Kuo,
  M.~Davies, M.~Carey, Z.~Hu, Y.~Guan, A.~Sahin, et~al.
\newblock An integrative genomic and proteomic analysis of pik3ca, pten, and
  akt mutations in breast cancer.
\newblock \emph{Cancer research}, 68\penalty0 (15):\penalty0 6084--6091, 2008.

\bibitem[Vandin et~al.(2012)Vandin, Upfal, and Raphael]{vandin2012}
F.~Vandin, E.~Upfal, and B.~J. Raphael.
\newblock De novo discovery of mutated driver pathways in cancer.
\newblock \emph{Genome research}, 22\penalty0 (2):\penalty0 375--385, 2012.

\bibitem[Vazquez et~al.(2008)Vazquez, Bond, Levine, and
  Bond]{vazquez2008genetics}
A.~Vazquez, E.~E. Bond, A.~J. Levine, and G.~L. Bond.
\newblock The genetics of the p53 pathway, apoptosis and cancer therapy.
\newblock \emph{Nature reviews Drug discovery}, 7\penalty0 (12):\penalty0
  979--987, 2008.

\bibitem[Zimmerman(1987)]{zimmerman1987comparative}
D.~W. Zimmerman.
\newblock Comparative power of student t test and mann-whitney u test for
  unequal sample sizes and variances.
\newblock \emph{The Journal of Experimental Education}, 55\penalty0
  (3):\penalty0 171--174, 1987.

\end{thebibliography}

%%%%%%%%%%%%%%%%%%%%%%%%%%%%%%%%%%%%%%%%
%%%%%%%%%%%%%%%%%%%%%%%%%%%%%%%%%%%%%%%%
%%%%%%%%%%%%%%%%%%%%%%%%%%%%%%%%%%%%%%%%

\newpage
\section*{Supplement -- Theoretical Performance Guarantees and Synthetic Data Evaluations}

Suppose that we have a weighted correlation-clustering instance, where 
each edge $e$ incurs a cost $\wmm_e$ if it is placed within a cluster and 
incurs a cost $\wpp_e$ if it is placed between clusters. 
We also assume that each cluster may
contain at most $K+1$ vertices, where $K$ is a fixed constant. If no
constraint on the size of the clusters is desired, one can simply set
$K = \sizeof{V}$.

We wish to find an approximation of the minimum-cost clustering. As already 
mentioned in the Approach Section, the C$^3$ method may not work for fully 
general weights, and, due to the
integrality gap in the general case, no algorithm based on LP rounding
can work in the fully general weight case. We therefore require the following 
weight constraints:
\begin{itemize}
\item $\wpp_e \leq 1$ for every edge $e$, and
\item $\wpp_e + \wmm_e \geq 1$ for every edge $e$.
\end{itemize}
Note that these constraints are satisfied if the weights obey
the probability constraints $\wpp_e + \wmm_e = 1$ for all $e$, but they
also apply to much more general choices of weights, which is relevant for 
the clustering problem at hand. 
%(For example, one could consider
%labeling every edge $+$ or $-$ and assigning weights $\wpp_e = 1$,
%$\wmm_e = 0$ to edges with label $+$ and assigning weights $\wpp_e = 0$,
%$\wmm_e = \lambda \geq 1$ to edges with label $-$).
%Consider the following linear program:
%\begin{align*}
%  \text{Minimize }& \sum_{e \in E}(\wpp_e x_e + \wmm_e (1-x_e)),\\
%  \text{subject to}\\
%  x_{uv} \leq x_{uz} + x_{zv} &\quad{\text{(for all distinct $u,v,z$)}}\\
%  0 \leq x_{e} \leq 1 &\quad{\text{(for all $e$)}}.
%\end{align*}
%The integer restriction of this program, where we have the stronger
%restriction $x_{e} \in \{0,1\}$, is the correlation-clustering problem
%without size bounds.

Recall that we adopted the convention that $x_{uu}=0$ for all $u$.We also observe that one can think
of the quantity $x_{uv}$ as a ``distance'' between $u$ and $v$.
Before starting the analysis of our algorithm, we record an observation based on the triangle inequality:
\begin{observation}\label{obs}
  Let $x$ be a feasible LP solution, and let $wz$ be an edge.
  For any vertex $u$, we have $x_{wz} \geq x_{uz} - x_{uw}$ and
  $1 - x_{wz} \geq 1 - x_{uz} - x_{uw}$.
\end{observation}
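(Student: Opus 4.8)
The plan is to read both inequalities directly off the triangle-inequality constraints \eqref{triangle1} of the LP, using only feasibility together with the symmetry convention $x_{uv}=x_{vu}$ (the variables are indexed by undirected edges) and the relaxed bounds $x_e\in[0,1]$.

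For the first inequality I would instantiate \eqref{triangle1} with the roles $(u,v,z)\mapsto(u,z,w)$, which is legitimate whenever $u,z,w$ are distinct; this gives $x_{uz}\le x_{uw}+x_{wz}$, and rearranging yields $x_{wz}\ge x_{uz}-x_{uw}$. For the second inequality I would instead take the roles $(u,v,z)\mapsto(w,z,u)$, which gives $x_{wz}\le x_{wu}+x_{uz}=x_{uw}+x_{uz}$; subtracting both sides from $1$ produces $1-x_{wz}\ge 1-x_{uz}-x_{uw}$, as claimed.

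The only point needing care is the degenerate case in which $u$ equals $w$ or $z$, since then \eqref{triangle1} is not directly applicable. Here I would fall back on the conventions already in force: $x_{uu}=0$ and $x_e\ge 0$. If $u=w$ both inequalities collapse to equalities; if $u=z$ the first becomes $x_{wz}\ge -x_{wz}$, true by nonnegativity, and the second is again an equality. I do not expect any genuine obstacle: the observation is a pure bookkeeping consequence of feasibility, amounting to two applications of a single constraint plus the inspection of these two trivial corner cases.
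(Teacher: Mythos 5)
Your proof is correct and matches the paper's intent exactly: the paper states Observation~\ref{obs} without proof as an immediate consequence of the triangle-inequality constraints \eqref{triangle1}, and your two instantiations of that constraint (plus the check of the degenerate cases using $x_{uu}=0$ and $x_e\ge 0$) are precisely the bookkeeping the paper leaves implicit.
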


Given any feasible solution $x$ to the linear program, we use the size-bounded clustering $\cee$ generated via Algorithm~\ref{rounding}, which we express in terms of a parameter $\alpha$ whose optimal value will be determined later in the proof. The idea behind the rounding is to use classical rounding to obtain a solution to the
non-size-bounded problem, and if overlarge clusters are produced, to partition them into small-sized clusters.

The standard rounding algorithm comes with a guarantee that $\cost(\cee)
\leq 6\cost(x)$.  Here, we cannot ask for such a guarantee: it is
possible, for example, that $\cost(x) = 0$ and $\cost(\cee) > 0$, if
all input edges are positive but there are too many vertices to fit
into a single cluster. Instead, we prove our approximation ratio by
bounding $\cost(\cee)$ in terms of a different lower bound on the cost
of an optimal size-bounded clustering.

Let $C_1 = \cost(x)$. Choose $Y \subset E(G)$ to minimize $\sum_{e \in
  Y}\wpp_e$ subject to the constraint that each vertex $v$ has at most
$K$ incident edges which do not lie in $Y$. Let $C_2 = \sum_{e \in
  Y}\wpp_e$. We think of $C_2$ as a lower bound on positive weight
that must be cut in order to produce a clustering with all clusters of
size at most $K+1$, since the edges contained within clusters
form a subgraph of maximum degree at most $K$. Thus, the cost of an
optimal size-bounded clustering is at least $\max\{C_1, C_2\}$. In
particular, an optimal clustering costs at least $\frac{7}{9}C_1 +
\frac{2}{9}C_2$. We will use the following lower bound on $C_2$:
\begin{definition}
  For $v \in V(G)$, the \emph{excess weight at $v$}, written $e(v)$,
  is defined by
  \[ e(v) = \min\{\sum_{z \in S}\wpp_{vz} \st \text{$S \subset N(v)$ and $\sizeof{N(v) - S} \leq K$}\}. \]
\end{definition}
\begin{lemma}\label{lem:c2bound}
  $C_2 \geq \frac{1}{2}\sum_{v \in V(G)}e(v)$.
\end{lemma}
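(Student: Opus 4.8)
The goal is to show $C_2 \geq \frac{1}{2}\sum_{v\in V(G)} e(v)$, where $C_2 = \sum_{e\in Y}\wpp_e$ for an optimal choice of $Y$. Recall that $Y$ is chosen so that each vertex has at most $K$ incident edges outside $Y$; equivalently, the complement $E(G)\setminus Y$ induces a subgraph of maximum degree at most $K$, and $Y$ is the set we ``pay for'' with positive weight. The plan is to relate the global quantity $C_2$ to the sum of the local quantities $e(v)$ by a double-counting / degree argument: each edge in $Y$ is incident to exactly two vertices, so summing a per-vertex lower bound and dividing by two recovers $C_2$.

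\textbf{Step 1: Localize $Y$ at each vertex.} Fix a feasible $Y$ (in particular the optimal one defining $C_2$). For each vertex $v$, let $Y_v = \{e\in Y : v\in e\}$ be the edges of $Y$ incident to $v$, and let $S_v = \{z\in N(v) : vz\in Y_v\}$ be the corresponding set of neighbors. By the defining constraint on $Y$, vertex $v$ has at most $K$ incident edges \emph{not} in $Y$, i.e. $\sizeof{N(v)\setminus S_v}\leq K$. Hence $S_v$ is a feasible set in the minimization defining $e(v)$, so $\sum_{z\in S_v}\wpp_{vz}\geq e(v)$, i.e. $\sum_{e\in Y_v}\wpp_e \geq e(v)$.

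\textbf{Step 2: Sum over $v$ and double-count.} Summing the inequality from Step 1 over all $v\in V(G)$ gives $\sum_{v\in V(G)}\sum_{e\in Y_v}\wpp_e \geq \sum_{v\in V(G)} e(v)$. On the left-hand side, each edge $e = uv \in Y$ is counted exactly twice — once in $Y_u$ and once in $Y_v$ — so the left-hand side equals $2\sum_{e\in Y}\wpp_e = 2C_2$. Therefore $2C_2 \geq \sum_{v\in V(G)} e(v)$, which is the claimed bound.

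\textbf{Main obstacle.} This argument is essentially a routine double-counting, so there is no deep obstacle; the only thing to be careful about is making sure that the per-vertex restriction of a \emph{globally} feasible $Y$ is itself feasible for the local minimization defining $e(v)$ — i.e. that the constraint ``at most $K$ incident edges outside $Y$'' restricts correctly to ``$\sizeof{N(v)\setminus S_v}\leq K$'' at each individual vertex. This is immediate from the definitions (the global degree-bound constraint is literally a conjunction of per-vertex constraints), but it is the one place where the logic must be spelled out. One should also note nonnegativity of the weights ($\wpp_e\geq 0$) is not even needed here, since every term in sight is being compared directly; the inequality goes through verbatim.
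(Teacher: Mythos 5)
Your proof is correct and follows essentially the same route as the paper's: both arguments observe that the set of $Y$-neighbors of each vertex $v$ is feasible for the minimization defining $e(v)$, sum the resulting per-vertex bound, and divide by two because each edge of $Y$ is counted at both endpoints. Your write-up merely spells out the feasibility step that the paper leaves implicit.
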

\begin{proof}
  Choose $Y$ as described above. We have
  \[ C_2 = \sum_{e \in Y}\wpp_e = \frac{1}{2}\sum_{v \in V(G)}\sum_{vz \in Y}\wpp_{vz} \geq \frac{1}{2}\sum_{v \in V(G)}e(v), \]
  where the second equality follows from the fact that each edge of $Y$ is counted once at each of its endpoints.
\end{proof}

We use the ``charging'' idea in rounding, where one can make
``charges'' both against the individual contribution of each edge, as
well as a global ``bank'' that will be paid for using $C_2$. 

\caze{1}{A singleton cluster $\{u\}$ is output.} The total cluster
cost when outputting a singleton cluster $\{u\}$ is $\sum_{v \in S-u}
\wpp_{uv}$, while the total LP cost accrued by edges incident to $u$
is $\sum_{v \in S-u} \wpp_{uv}x_{uv}$.

If the singleton $\{u\}$ is output, then we have
\[ \sum_{v \in T}x_{uv} \geq \frac{\alpha\sizeof{T}}{2}, \] since
either the average in step $3$ was ``too high'', or else $T$ was
empty. Now for each $v \in T$ we have $x_{uv} \leq \alpha$. For such
$x_{uv}$ we have $1-x_{uv} \geq x_{uv}$, since $\alpha < 1/2$. This
yields the following lower bound on the LP cost of $uv$:
\[ \wpp_{uv} x_{uv} + \wmm_{uv} (1-x_{uv}) \geq \wpp_{uv} x_{uv} +
\wmm_{uv} x_{uv} \geq x_{uv}, \] where the last inequality uses the
bound $\wpp_{uv} + \wmm_{uv} \geq 1$. Thus, each edge $uv$ has LP
cost at least $x_{uv}$, and so the edges joining $u$ and $T$ have
total LP cost at least $\alpha\sizeof{T}/2$. Each such edge $uv$ incurs
cluster-cost $\wpp_{uv}$, which is at most $1$.  Thus, charging
$(2/\alpha)x_{uv}$ to each edge $uv$ for $v \in T$ gives enough charge to pay
for the cluster-cost of edges with $v \in T$, while each edge is
charged at most $2/\alpha$ times its LP cost.

For $v \in S-(T \cup \{u\})$, we have $x_{uv} > \alpha$, so each edge
$uv$ incurs LP cost at least $\alpha \wpp_{uv}$ and incurs cluster cost at
most $\wpp_{uv}$. Thus, charging such each edge $(1/\alpha)\wpp_{uv}x_{uv}$ pays
for the cluster cost of these edges.

\smallskip

\caze{2}{A nonsingleton cluster $\{u\} \cup T$ is output.} We first
consider edges inside $\{u\} \cup T$, then we consider edges joining
$\{u\} \cup T$ with $S - (\{u\} \cup T)$. Finally, we consider edges joining
$T_i$ with $T_j$ for $i \neq j$.

\textbf{Edges within $\{u\} \cup T$.} Suppose $vz$ is an edge contained in $\{u\}
\cup T$, so that $vz$ incurs cluster-cost $\wmm_{vz}$ and LP-cost at
least $\wmm_{vz}(1-x_{vz})$. By the definition of $T$, we have
$x_{uv}, v_{uz} \leq \alpha$.  Hence, by Observation~\ref{obs}, we have
$1-x_{vz} \geq 1 - x_{uv} - x_{vz} \geq 1-2\alpha$. Thus, charging $\frac{1}{1-2\alpha}\wmm_{vz}(1-x_{vz})$
to these edges pays for their cluster-cost.

\textbf{Edges joining $\{u\} \cup T$ with $S - (\{u\} \cup T)$.} Let
$z$ be a vertex outside $\{u\} \cup T$. A \emph{cross-edge} for $z$
is an edge from $z$ to $\{u\} \cup T$. We show that the cross-edges
for $z$ have total cluster-cost that is at most $\max\{\frac{1}{1-2\alpha}, \frac{2}{\alpha}\}$
times their total LP-cost.  Note that whenever $vz$ is a cross-edge,
we have $x_{uv} \leq \alpha$, by the definition of $T$. Each
cross-edge incurs $vz$ cluster-cost $\wpp_{vz}$ and LP-cost
$\wpp_{vz}x_{vz} + \wmm_{vz}(1-x_{vz})$.

If in fact $x_{uz} \geq 1-\alpha$, then Observation~\ref{obs} yields
$x_{vz} \geq x_{uz} - x_{uv} \geq 1-2\alpha$ for every cross-edge $v$,
yielding LP-cost at least $\frac{1}{1-2\alpha}\wpp_{vz}$; thus, such
edges have cluster-cost at most $\frac{1}{1-2\alpha}$ times their LP
cost.

It remains to handle the case $x_{uz} \in (\alpha, 1-\alpha)$. Here,
we seek a lower bound on the total LP-cost of the cross-edges for
$z$. Note that the total cluster cost of these edges is $\sum_{v \in
  \{u\} \cup T}\wpp_{vz}$, which is at most $\sizeof{T}$ since each
$\wpp_{vz} \leq 1$.

By Observation~\ref{obs}, we have $x_{vz} \geq x_{uz}-x_{uv}$ and $1 - x_{vz} \geq 1 - x_{uz} - x_{uv}$
for each edge $vz$. It follows that the total LP-weight of the cross-edges for $z$
is at least
\[ \sum_{v \in \{u\}\cup T}\left[\wpp_{vz}(x_{uz} - x_{uv}) + \wmm_{vz}(1 - x_{uz} - x_{uv})\right], \]
which rearranges to
\[ \sum_{v \in \{u\} \cup T}\left[\wpp_{vz}x_{uz} + \wmm_{vz}(1 - x_{uz})\right] - \sum_{v \in \{u\}\cup T}\left[(\wpp_{vz} + \wmm_{vz})x_{uv}\right] \]
Using the bounds $\sum_{v \in \{u\} \cup T}x_{uv} \leq \alpha(\sizeof{T} + 1)/2$ and $x_{uv} \leq \alpha$ for $v \in T$, we bound the subtracted sum as follows:
\begin{align*}
  &\sum_{v \in \{u\} \cup T} \left[(\wpp_{vz} + \wmm_{vz})x_{uv}\right] =\\
  & \sum_{v \in \{u\} \cup T}x_{uv} + \sum_{v \in \{u\} \cup T} \left[(\wpp_{vz} + \wmm_{vz} - 1)x_{uv}\right] \\
  &\leq \frac{\alpha(\sizeof{T} + 1)}{2} + \sum_{v \in \{u\} \cup T} \left[(\wpp_{vz} + \wmm_{vz} - 1)x_{uv}\right] \\
  &= \sum_{v \in \{u\} \cup T} \left[(\wpp_{vz} + \wmm_{vz} - 1)x_{uv} + \frac{\alpha}{2}\right] \\
  &\leq \sum_{v \in \{u\} \cup T} \left[\alpha(\wpp_{vz} + \wmm_{vz}) - \frac{\alpha}{2}\right].
\end{align*}
Note that in the last inequality we rely on the fact that $\wpp_{vz} + \wmm_{vz} \geq 1$.
It follows that the total LP cost is at least
\[ \sum_{v \in \{u\} \cup T}\left[\wpp_{vz}x_{uz} + \wmm_{vz}(1 - x_{uz}) - \alpha(\wpp_{vz} + \wmm_{vz}) + \frac{\alpha}{2}\right].\]
 a linear function in $x_{uz}$. We consider the behavior of this function on the interval $(\alpha, 1-\alpha)$.

When $x_{uz} = \alpha$, the lower bound simplifies to
\[ \sum_{v \in \{u\} \cup T}\left[\alpha \wpp_{vz} + (1-\alpha)\wmm_{vz} - \alpha(\wpp_{vz} + \wmm_{vz}) + \frac{\alpha}{2}\right], \]
which is at least $\alpha(\sizeof{T}+1)/2$ since $\alpha < 1/2$ implies $(1-\alpha)\wmm_{vz} \geq \alpha\wmm_{vz}$.

When $x_{uz} = 1 - \alpha$, the lower bound simplifies to
\[ \sum_{v \in \{u\} \cup T}\left[ (1-\alpha)\wpp_{vz} + \alpha\wmm_{vz} - \alpha(\wpp_{vz} + \wmm_{vz}) + \frac{\alpha}{2}\right], \]
which is again at least $\alpha(\sizeof{T}+1)/2$ since $(1-\alpha)\wpp_{vz} \geq \alpha\wpp_{vz}$.

Thus, when $x_{uz} \in (\alpha, 1-\alpha)$, we conclude that the total
cluster-cost of the cross-edges for $z$ is at most $2/\alpha$ times
the total LP-cost of those edges. In all cases, charging
$\max\{\frac{1}{1-2\alpha}, \frac{2}{\alpha}\} $ times the LP-cost of
the edges pays for their cluster-cost.

\textbf{Edges joining $T_i$ with $T_j$ for $i \neq j$.}  These are the
trickiest edges to deal with. The problem is that these edges may have
low LP-cost and high cluster-cost, so we cannot \emph{just} charge
these edges to pay for their cluster cost. Instead, we will charge
edges inside the $T_i$, and we will charge against the bank.

Let $v \in T_i$. A \emph{cross-edge} for $v$ is an edge
$vw$ with $w \in T_j$ for $j \neq i$. Note that every edge joining
$T_i, T_j$ with $i \neq j$ has at least one endpoint that does not lie
in $T_p$; as such, we will assume that $v \notin T_p$.

Define quantities $A_v$ and $B_v$ as follows:
\begin{align*}
 A_v &= \sum\{\wpp_e \st \text{$e$ is a crossing edge for $v$}\}, \\ 
 B_v &= \sum\{\wmm_{vz} \st \text{$z \in T_i-v$}\}.
\end{align*}
Since $\sizeof{T_i} = K+1$, we have
\begin{align*}
\sum_{z \in N(v)}\wpp_e \geq A_v + \sum_{z \in T_i-v}\wpp_{vz} &\geq A_v + \sum_{z \in T_i-v}(1 - \wmm_{vz})\\
&= A_v + (K - B_v),
\end{align*}
where the second inequality follows from $\wpp_{vz} + \wmm_{vz} \geq 1$ and the final equality follows from the fact that $\sizeof{T_i} = K+1$ for $i < p$. On the other hand, since each $\wpp_e \leq 1$,
we have
\[ e(v) \geq \left(\sum_{z \in N(v)}\wpp_e\right) - K \geq A_v -
B_v. \] We charge the quantity $e(v)$ to the bank, and we charge
$\wmm_{vz}$ to each edge $vz$ with $z \in T_i$. This yields total
charge at least $A_v$, which is the total cluster-cost of the
cross-edges for $v$.  Note that for each edge $vz$ charged this way,
Observation~\ref{obs} yields $1 - x_{vz} \geq 1 - x_{uv} - x_{vz} \geq
1 - 2\alpha$. Hence, $v$ charges the edge $vz$ at most
$\frac{1}{1-2\alpha}$ times its LP-cost.  Observe that each edge in
$T$ is only charged this way at its endpoints. (Furthermore, edges
whose endpoints lie in different $T_i, T_j$ with both $i,j < p$ are
actually paid for twice.)

\smallskip

In total, we have paid for all the cluster-costs by making the following charges:
\begin{itemize}
\item Edges $vz$ within each cluster $T_i$ were charged at most
  $\frac{1}{1-2\alpha}\wmm_{vz}(1-x_{vz})$ to pay for themselves plus
  at most $\frac{2}{1-2\alpha}$ to pay for edges joining $T_i$ to
  $T_j$ for $i \neq j$.  Thus, their total charge is at most
  $\frac{3}{1-2\alpha}$ times their total LP cost.
\item Edges $vz$ for which $v \in \{u\} \cup T$ and $z \in S - (\{u\} \cup T)$ were charged at most $\max\{\frac{1}{1-2\alpha}, \frac{2}{\alpha}\}$ times
  their LP cost.
\item The bank was charged $\sum_{v \in V}e(v)$. Thus, by
  Lemma~\ref{lem:c2bound}, the total charge to the bank is at
  most $2C_2$.
\end{itemize}
It follows that the total cost is minimized when
$\frac{3}{1-2\alpha} = \frac{2}{\alpha}$, which yields $\alpha = 2/7$. For this choice of $\alpha$,
the total charge is at most $7C_1 + 2C_2$, which is at most $9$ times the lower bound of $\frac{7}{9}C_1 + \frac{2}{9}C_2$.''

Given that the theoretical guarantees only establish approximation results, it is appropriate to compare the solution of the proposed relaxation with the solution of the original ILP on synthetic data. For this purpose, we created a simple graph comprising $35$ vertices grouped into five clusters of size $6$ and one cluster of size $5$: in order to ensure this cluster separation, we selected the weights as follows. For $u,v\in V(G)$, we set
\begin{align}
\wpp_{uv} &= \gamma,\  \  \  \  \text{if $u$ and $v$ are in the same cluster}\\\nonumber
\wpp_{uv} &= 1-\gamma,\  \  \  \  \text{otherwise},
\end{align} 
and $\wmm_{uv}=1-\wpp{uv}$. We ran the ILP, the C$^3$ algorithm, and the clu for $\gamma\in\{0.6,0.7,0.8,0.9,0.99\}$; all three algorithms recovered the correct clusters without making errors. 
We subsequently modified the clusters by randomly reversing the positive and negative weights of up to $20$ edges. All three algorithms recovered the correct clusters even in this scenario. 
Since large weight perturbations may change the clusters in a way that the ground truth becomes undetectable, for larger scale perturbation we decided to compare the solution of the C$^3$ algorithm to that of the ILP method. We again ran the ILP and C$^3$ algorithms on a graph with $35$ vertices, where the positive weights, $\wpp_e$, were chosen randomly according to a multinomial distribution; we also considered a number of choices for the parameters of the distributions. The negative weights were chosen according to the formula $\wmm_e=1-\wpp_e$. For each set of weights, we compared the value of the objective function for the results obtained using the two algorithms. We observed that in all cases the value of objective functions, denoted by $f$, satisfied 
\begin{align}
f(\text{ILP}) \leq f(\text{C$^3$})<2f(\text{ILP}),
\end{align} 
which shows that in practice, the C$^3$ algorithm performed much better than suggested by the theoretical analysis. Furthermore, the clusters generated by the ILP and the C$^3$ algorithm showed more than $90\%$ overlap in terms of participating nodes.

\end{document}